\documentclass{article}
\usepackage{MACI,times}
\iclrfinalcopy

\usepackage{amsmath,amsfonts,bm}

\def\eqref#1{equation~\ref{#1}}

\def\1{\bm{1}}

\DeclareMathAlphabet{\mathsfit}{\encodingdefault}{\sfdefault}{m}{sl}
\SetMathAlphabet{\mathsfit}{bold}{\encodingdefault}{\sfdefault}{bx}{n}

\DeclareMathOperator*{\argmin}{arg\,min}

\usepackage{listings}
\usepackage{hyperref}
\usepackage{url}
\usepackage{booktabs}
\usepackage{siunitx}
\usepackage{subcaption}
\usepackage{multirow}
\usepackage[table]{xcolor}
\usepackage[svgnames]{xcolor}
\usepackage{graphicx}
\usepackage{enumitem}
\usepackage{amsfonts, amssymb, amsmath, amsthm, mathtools}
\usepackage{dsfont}
\usepackage{tcolorbox}
\usepackage{enumitem}
\newenvironment{promptbox}[1]{
  \begin{tcolorbox}[
    title=#1,
    colback=gray!2,
    colframe=gray!50,
    boxrule=0.4pt,
    sharp corners,
    before skip=6pt,
    after skip=6pt,
    left=4pt,
    right=4pt,
    top=4pt,
    bottom=4pt
  ]
  \begingroup
  \setlength{\parindent}{0pt}
}{
  \endgroup
  \end{tcolorbox}
}
\theoremstyle{plain}       
\usepackage{algorithm}
\usepackage{wrapfig}
\usepackage{algpseudocode}
\newtheorem{theorem}{Theorem}
\newtheorem{lemma}{Lemma}

\newcolumntype{C}[1]{>{\centering\arraybackslash}m{#1}}

\hypersetup{
	colorlinks=true,
	linktoc=all,
	linkcolor=blue,
	urlcolor=blue,
	citecolor=blue,
	filecolor=blue,
	linktocpage=true
}
\usepackage{cleveref}
\theoremstyle{definition}
\newtheorem{definition}{Definition}

\theoremstyle{remark}

\title{Multi-LLM Adaptive Conformal Inference for Reliable LLM Responses}

\author{
Kangjun Noh$^{1}$, Seongchan Lee$^{2}$, Ilmun Kim$^{2}$, Kyungwoo Song$^{1}$\thanks{Corresponding Author.} \\[2ex]
$^{1}$Department of Applied Statistics and Data Science, Yonsei University \\
$^{2}$Department of Mathematical Sciences, KAIST \\[2ex]
}

\newcommand{\given}{\,|\,}
\newcommand{\sgiven}{\!\,|\,\!}

\begin{document}

\maketitle

\begin{abstract}
Ensuring factuality is essential for the safe use of Large Language Models (LLMs) in high-stakes domains such as medicine and law. Conformal inference provides distribution-free guarantees, but existing approaches are either overly conservative, discarding many true-claims, or rely on adaptive error rates and simple linear models that fail to capture complex group structures. To address these challenges, we reformulate conformal inference in a multiplicative filtering setting, modeling factuality as a product of claim-level scores. Our method, Multi-LLM Adaptive Conformal Inference (\textbf{MACI}), leverages ensembles to produce more accurate factuality-scores, which in our experiments led to higher retention, while validity is preserved through group-conditional calibration. Experiments show that MACI consistently achieves user-specified coverage with substantially higher retention and lower time cost than baselines. Our repository is available at \url{https://github.com/MLAI-Yonsei/MACI.git}.

\end{abstract}

\setlength{\textfloatsep}{2pt plus 0.5pt minus 2.0pt}

\section{Introduction}

As the performance of Large Language Models (LLMs) continues to advance, attempts to directly utilize their responses in high-stakes domains such as medicine and law are increasing. However, studies continue to report that LLM responses may contain false information~\citep{wang2024factualitylargelanguagemodels}. Therefore, to use LLMs reliably in these critical fields, guaranteeing the factuality of their responses has emerged as an important challenge.

Various methods have been proposed to guarantee the factuality of LLMs, but some are difficult to apply to black-box models~\citep{meng2023locatingeditingfactualassociations, quevedo2024detectinghallucinationslargelanguage,chen2024inside} or require access to large external databases or online databases~\citep{chen2024complexclaimverificationevidence, lee2025refindsemeval2025task3}. Sampling-based methods~\citep{manakul2023selfcheckgptzeroresourceblackboxhallucination, sawczyn2025factselfcheckfactlevelblackboxhallucination} are relatively free from the constraints, but the process of repeatedly checking for response consistency incurs considerable time and financial costs, and they face difficulties in rigorously providing a statistical guarantee at a user-specified error rate.

Recently, studies applying Conformal Inference (CI)~\citep{papadopoulos2002inductive, vovk2005algorithmic, lei2017distributionfreepredictiveinferenceregression, angelopoulos2022gentleintroductionconformalprediction} to guarantee the factuality of LLMs have been proposed. For instance, \cite{mohri2024languagemodelsconformalfactuality} apply the concept of CI to the existing framework of decomposing LLM responses into independent claims and assigning a factuality-score to each one. Their method proposes filtering out claims that do not pass a predetermined threshold. However, because this single, global threshold is applied uniformly to all data, it provides only marginal coverage and can be overly conservative, resulting in the removal of a lot of true information. To improve upon this, \cite{cherian2024largelanguagemodelvalidity} introduce conditional conformal inference. Instead of a single static threshold for all data, this method employs a threshold function that allows it to change based on the characteristics of a given sample. But it relies on adaptive error rates that are unsuitable for high-stakes applications requiring a fixed guarantee. Its threshold function also struggles to capture the characteristics of the complex grouping criteria of LLM responses that are separated by their semantic properties. Fundamentally, they typically define conformity score based on a single worst-case score (e.g., the highest confidence score among false-claims), which ignores the collective confidence of other claims and renders the calibration process highly sensitive to estimation error of worst-case score, leading to filtering many true-claims out.

\begin{figure}[t]
\centering
\includegraphics[width=\columnwidth]{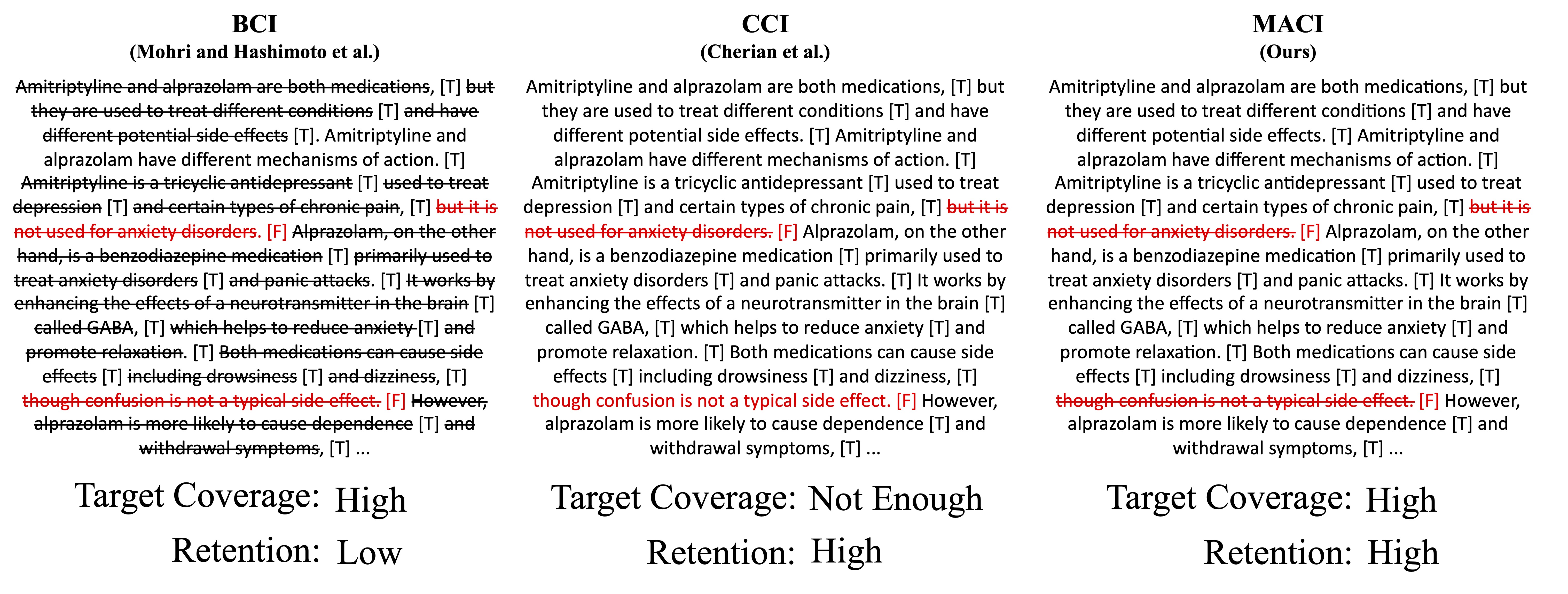}
\caption{Comparison of Conformal Inference Methods. T (true) and F (false) denote ground-truth labels per claim. Basic Conformal Inference~\citep{mohri2024languagemodelsconformalfactuality} attains coverage by aggressive filtering, yielding low retention. Conditional Conformal Inference~\citep{cherian2024largelanguagemodelvalidity} proposes adaptive thresholds but relaxes guarantees; MACI achieves both high coverage and retention.}
\label{MACI:fig1}
\end{figure}

In this context, we propose a new methodology called Multi-LLM Adaptive Conformal Inference (MACI). The core objective of MACI is to preserve as much factual information as possible while strictly adhering to a low user-specified error rate. Assuming an ideal oracle factuality-score, we derive an explicit filtering rule which is written as a cumulative product of probabilities that retains the maximum number of claims subject to a target coverage constraint. Inspired by this finding, we design a new adaptive CI framework that uses a conformity score in the form of a cumulative product of estimated factuality-scores. In contrast to previous methods that calculate conformity scores relying on the single worst-case score, the conformity score of MACI aggregates estimated factuality-scores between claims, significantly improving the robustness of the calibration. We further theoretically prove that the quality of the factuality-score directly impacts the efficiency of MACI, which is quantified by retention ratio. Accordingly, we adapt our strategy to maximize factuality-score quality through a multi-LLM ensemble. As a result, MACI not only theoretically guarantees group-conditional coverage but empirically demonstrates robust group-conditional coverage across diverse datasets, all while showing a substantially higher retention ratio than existing methodologies. Figure~\ref{MACI:fig1} shows an actual example demonstrating MACI's superior coverage and retention ratio compared to existing conformal inference methods.

Our main contributions are:
\begin{itemize}[itemsep=0.1ex, topsep=0pt, parsep=0pt, partopsep=0pt, leftmargin=1.5em]
\item We introduce a multiplicative filtering framework that models factuality as the product of claim-level scores (factuality-score) while preserving finite-sample guarantees.
\item We provide, to our knowledge, the first retention theoretical analysis in conformal inference, linking oracle–estimator deviations to true-claim preservation and motivating ensemble design.
\item We extend conformal inference with group-conditional calibration and a multi-LLM ensemble, ensuring group-conditional coverage and showing substantially higher retention than conformal baselines in high-stakes domains.
\end{itemize}

\vspace{-0.5em}
\section{Related works}
\label{sec:related_works}

Existing Basic Conformal Inference (BCI)~\citep{mohri2024languagemodelsconformalfactuality} performs false-claim filtering by applying a factuality-score-based threshold to individual claims within LLM responses, providing a distribution-free guarantee of an error rate below $\alpha$ over the entire data. However, because BCI provides this guarantee only over the entire sample distribution (marginal coverage), specific subgroups may experience group-specific under- or over-coverage, and biases may persist.

Much research in the CI field has been conducted to move beyond this marginal coverage and provide group-conditional coverage guarantees~\citep{vovk12conditionalvalidity}. \cite{hébertjohnson2018calibrationcomputationallyidentifiablemasses} propose enhanced predictive guarantees that simultaneously satisfy calibration across various computationally definable subgroups through multicalibration. \cite{jung2022batchmultivalidconformalprediction} propose the batch multivalid conformal prediction that introduces the concept of multivalid coverage where coverage is guaranteed simultaneously across multiple groups and various threshold levels. This approach further strengthens simple group-conditional coverage. In practice, when such guarantees are applied, while coverage deviation across groups decreases, the prediction set tends to become larger and retention tends to be lower. \cite{ding2023classconditionalconformalpredictionclasses} and \cite{pmlr-v267-gao25c} analyze the trade-off between group-based coverage and efficiency (set size) by clustering classes or groups or by combining surrogate information, and they propose improvements. \cite{liu2025multigroupuncertaintyquantificationlongform} apply these group-conditional coverage methodologies to ensure the factuality of LLM responses. They systematically evaluate factuality guarantee performance across demographic subgroups by applying multicalibration and multivalid CP to claim-level score calibration and document-level conformal prediction. \cite{detommaso2024multicalibrationconfidencescoringllms} enhance the reliability of LLM confidence itself through group-specific multicalibration using embeddings and self-annotation. However, stronger group-conditional or multivalid guarantees typically require more conservative prediction sets, which can hurt efficiency (e.g., larger sets or more abstentions) in standard conformal settings. When such methods are naively applied to LLM response filtering, this conservatism can translate into lower retention, raising concerns about practical usability in high-throughput applications. Our research continues the context of guaranteeing group-conditional coverage, but focuses on maintaining a practically high retention ratio while ensuring group-conditional coverage under realistic group definitions based on specific features. \cite{Feng_2025} propose a framework that ensures group-conditional coverage while increasing retained claims by applying RAG~\citep{lewis2021retrievalaugmentedgenerationknowledgeintensivenlp} to the calibration and filtering. However, this approach essentially shifts the application of conformal inference from the LLM to external components like the embedding model and retriever.

\cite{cherian2024largelanguagemodelvalidity} share the same goal with MACI, which is retaining as many true-claims as possible under group-conditional coverage. They extend conditional conformal methods \citep{gibbs2024conformalpredictionconditionalguarantees} to learn sample-specific thresholds from calibration data. They propose a framework that achieves conditional guarantees based on groups defined by prompt and response features while actively improving true-claim retention through techniques like adaptive error rate and conditional boosting. This approach of simultaneously pursuing the two practical goals of group-conditional coverage and retention ratio presents a new balance point between ensuring factuality in LLM responses per group and preserving information. Meanwhile, limitations such as the practicality of adaptive $\alpha$ in high-risk environments and the limited expressiveness of the threshold function still remain.

\vspace{-0.6cm}
\section{Background and preliminaries}
\vspace{-0.2cm}

\label{sec:problem_setup}
\textbf{Document structure and factuality-scores.}
Let $\mathcal{P}$ denote the space of prompts and $\mathcal{C}$ the space of claims. Each document $D=(P,C,Y)$ consists of a prompt $P\in\mathcal{P}$, a set of claims $C=\{c_1,\dots,c_{|C|}\}\subseteq\mathcal{C}$, 
and labels $Y\in\{0,1\}^{|C|}$ indicating which claims are factual (true-claims). 
We assume documents are drawn i.i.d.\ from a distribution $\mathbf{P}$, which implies exchangeability of calibration and test data. A factuality-score function $p:\mathcal{P}\times\mathcal{C}\to[0,1]$ 
assigns each $(P,c)$ the probability of being factual, with oracle $p^*$ and estimator $\hat p$.

\textbf{Filtering operator.}
Given a score function $p$, a threshold $\tau\in[0,1]$, and optional randomization $U$, the filtering operator $F(p,\tau,U;P,C)\subseteq C$ returns the claims retained under~$\tau$.
We denote by $F_{n,\alpha}$ the data-driven version of this operator, obtained by calibrating~$\tau$ on held-out data to achieve target level~$\alpha$; this calibrated rule is the central object of our analysis.

{\textbf{Group-conditional coverage.}
Exact document-level coverage is infeasible in a distribution-free setting~\citep{vovk12conditionalvalidity,barber20limit}.
Instead, we require validity within subgroups that capture meaningful distinctions such as domains, topics, or user populations.
The held-out example is a full document
$D_{n+1} = (P_{n+1}, C_{n+1}, Y_{n+1})$, drawn i.i.d.\ from the same distribution as the calibration data. Formally, let $g:\mathcal{P}\times\mathcal{C}\to\{1,\dots,K\}$ be a grouping function
that assigns each document $(P_i, C_i)$ to one of $K$ groups. Each document consists of a prompt $P_i$ and a set of generated claims $C_i = \{c_{i,1},\ldots,c_{i,N_i}\}$, where $N_i = |C_i|$ denotes its number of claims. Let $A_i = \{c_{i,j} \in C_{i} : y_{i,j} = 1\}$ denote the true-claim set of document $D_i$. We then require that, for every group $k\in\{1,\ldots,K\}$,
\begin{align}
\label{eq:group_coverage}
\mathbb{P}\!\left(
F_{n,\alpha}(P_{n+1},C_{n+1}) \subseteq A_{n+1}
~\middle|~ g(P_{n+1},C_{n+1}) = k
\right) \ge 1-\alpha.
\end{align}

This mirrors the Mondrian conformal prediction framework~\citep{vovk2005algorithmic},
but is applied here to prompt–claim pairs rather than individual data points.
In our experiments, we instantiate $g$ using high-level, dataset-specific categories (e.g., medical question types or entity groups).}

\section{MACI: Multi-LLM Adaptive Conformal Inference}
Building on Section~\ref{sec:problem_setup}, our goal is to design a filtering rule $F_{n,\alpha}$ that satisfies the group-conditional coverage guarantee (\ref{eq:group_coverage}). The baseline BCI method applies one global threshold, which is simple but ignores group heterogeneity and relies on a single predictor. Inspired by adaptive conformal prediction~\citep{romano2020classificationvalidadaptivecoverage}, we propose MACI, which aggregates scores from multiple LLMs and calibrates group-conditional thresholds. This approach improves retention while maintaining the required coverage guarantees.

\subsection{Oracle Factuality}
\label{sec:oracle}
Given the definition of a factuality-score function in Section~\ref{sec:problem_setup}, we first consider an idealized regime where the factuality-score coincides with the true conditional probability. In this oracle setting, the model has complete distributional knowledge of claim correctness, so that for any prompt–claim pair $(P,c)$ it can evaluate $\mathbb{P}(y=1 \mid P,c)$ exactly.

\begin{definition}[Oracle Filtering Rule]
\label{def:oracle_factuality}
For any prompt-claim pair $(P,c)$ with binary factuality label 
$y \in \{0,1\}$, the oracle factuality-score is defined as $p^*(P,c) := \mathbb{P}(y = 1 \mid P, c).$ For a document $D_i = (P_i, C_i, Y_i)$ with claim set $C_i = \{c_{i,1}, \ldots, c_{i,N_i}\}$, the oracle score for each claim is
\begin{align*}
p_i^*(c_{i,j})
:= p^*(P_i, c_{i,j})
= \mathbb{P}(Y_{i,j} = 1 \mid P_i, c_{i,j}).
\end{align*}
\end{definition}
Let $[N_i] = \{1,2,\ldots, N_i\}$ and $\pi_i : [N_i] \to [N_i]$ be a permutation that orders the claims by decreasing oracle scores, $p_i^*(c_{i,\pi_i(1)}) \ge \cdots \ge p_i^*(c_{i,\pi_i(N_i)}),$ with ties broken arbitrarily. Define $\prod_k := \prod_{j=1}^k p_i^*(c_{i,\pi_i(j)})$ with the convention $\prod_{0}=1$ and $\prod_{N_i+1}=0$. For a threshold $\tau \in [0,1]$, define the cutoff index and filtered set
\begin{align*}
K_i^*(\tau) &:= \max\Big\{k\in[N_i]: \prod_{j=1}^k p_i^*(c_{i,\pi_i(j)}) \ge \tau \Big\}, \quad
F_\tau^*(P_i,C_i) := \{c_{i,\pi_i(j)} : j \le K_i^*(\tau)\}.
\end{align*}
with the convention $\max\emptyset=0$. 
Thus $F_\tau^*$ ensures coverage at level $\tau$ and is monotone in $\tau$
($\tau_1 \le \tau_2 \Rightarrow F_{\tau_2}^* \subseteq F_{\tau_1}^*$), 
But it is conservative since coverage typically exceeds $\tau$. 
To obtain \emph{exact} coverage, we randomize at the boundary index $K_i^*(\tau)$. 
Define
$\gamma_i(\tau)=\tfrac{\Pi_{K_i^*(\tau)}-\tau}{\Pi_{K_i^*(\tau)}-\Pi_{K_i^*(\tau)+1}}\in[0,1]$ 
(with $\gamma_i(\tau)=0$ if the denominator vanishes). 
With $U_i\sim\mathrm{Unif}(0,1)$, the randomized oracle rule is
\begin{align*}
F(p^{\ast},\tau,U_i;P_i,C_i)=
\begin{cases}
\{c_{i,\pi_i(j)}: j\le K_i^*(\tau)\}, & U_i>\gamma_i(\tau), \\[6pt]
\{c_{i,\pi_i(j)}: j\le K_i^*(\tau)+1\}, & U_i\le\gamma_i(\tau).
\end{cases}
\end{align*}
This randomization balances inclusion and exclusion at the boundary, achieving exact coverage at level $\tau$ while maximizing expected retention ratio.

\subsection{Adaptive Conformal Inference for false-claim filtering}
\label{sec:aci}
{
Before presenting our adaptive conformal inference (ACI) framework, we clarify the perspective taken in this section. Conformal inference aims to guarantee validity by designing a filtering rule that achieves a target coverage level. In principle, one could satisfy this requirement by discarding most claims, but such a strategy would be overly conservative. Thus, after ensuring validity, we examine how efficiency is affected when using an estimated factuality score rather than the oracle score, and how this behavior compares to the oracle benchmark under simple idealized assumptions.
}

{Based on this distinction, the remainder of this section first explains how our method secures \textbf{validity}, and then analyzes its \textbf{efficiency} in producing informative filtering sets.
}

\paragraph{{Validity.}}
As discussed in Section~\ref{sec:oracle}, the oracle filtering rule requires access to the oracle factuality-score $p^{\ast}$ and therefore serves only as a theoretical benchmark. In practice, $p^{\ast}$ is unknown and should be replaced with an estimated score $\hat p$ obtained from a black-box verifier (e.g., an LLM). The resulting filtering operator $F(\hat p,\tau,U;P,C)$ inherits the oracle structure but depends critically on how the threshold $\tau$ is chosen. The objective is to calibrate $\tau$ in a data-driven manner so that, with high probability, all retained claims are factual, thereby achieving valid finite-sample coverage. We estimate $\tau$ using conformal quantiles computed on held-out calibration data. This calibration step guarantees coverage even when $\hat{p}$ is only an imperfect approximation of the oracle score $p^{\ast}$.

To carry out this calibration, we require a conformity score that captures the document-level filtering event in a scalar form. Recall that for each document $(P_i, C_i, Y_i)$, $A_i = \{c_{i,j} \in C_i : y_{i,j} = 1\}$ 
denote the set of true-claims, and let $U_i \sim \mathrm{Unif}(0,1)$ be the randomization variable. 
We define the conformity score $E_i = \inf \{ \tau \in [0,1] : F(\hat p,\tau,U_i;P_i,C_i) \subseteq A_i \},$
which represents the smallest threshold at which all retained claims are true-claims.  
Each $E_i$ compresses the document-level filtering requirement into a single scalar quantity, making it directly suitable for conformal quantile calibration (\Cref{lem:event_equiv} in \Cref{app:proofs}).  
Applying the standard conformal quantile argument then yields the following finite-sample guarantee.

\begin{theorem}[Marginal Coverage Guarantee]
\label{thm:marginal_coverage}
If the samples $(P_i, C_i,Y_{i})$, for $i \in\{1, \ldots, n+1\}$, are exchangeable, 
ACI (Algorithm~\ref{alg:adaptiveCF}) satisfies
{\begin{align*}
\mathbb{P}\big(F_{n,\alpha}(P_{n+1},C_{n+1}) \subseteq A_{n+1}\big) \geq 1-\alpha
\end{align*}}
Furthermore, if the scores $E_i$ are almost surely distinct, the marginal coverage is nearly tight:
{\begin{align*}
\mathbb{P}\big(F_{n,\alpha}(P_{n+1},C_{n+1}) \subseteq A_{n+1}\big) \leq 1-\alpha+\frac{1}{n+1}.
\end{align*}}
\end{theorem}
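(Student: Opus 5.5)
The plan is the standard split-conformal quantile argument applied to the scalar conformity scores $E_i$. Algorithm~\ref{alg:adaptiveCF} is not displayed above, so I assume it computes $E_1,\dots,E_n$ on the calibration documents and sets $\hat\tau$ to the $\lceil (n+1)(1-\alpha)\rceil$-th smallest of them, with $\hat\tau=1$ (or $+\infty$) when that index exceeds $n$. The test output is then $F_{n,\alpha}(P_{n+1},C_{n+1}) = F(\hat p,\hat\tau,U_{n+1};P_{n+1},C_{n+1})$.

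\textbf{Step 1: reduce the coverage event to a scalar inequality.} I will use \Cref{lem:event_equiv} in \Cref{app:proofs}, which I take to state that
\[
F(\hat p,\tau,U_i;P_i,C_i)\subseteq A_i \iff E_i \le \tau .
\]
If I have to verify it, the argument is as follows. The randomized operator is monotone in $\tau$: raising $\tau$ can only lower $K_i(\tau)$ and, at the boundary, lower the inclusion probability through $\gamma_i$. For fixed $U_i$, the map $\tau\mapsto F(\hat p,\tau,U_i;\cdot)$ is therefore nonincreasing in set inclusion, so the set of $\tau$ with $F\subseteq A_i$ is an up-set $[E_i,1]$ or $(E_i,1]$. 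The delicate point is whether the infimum is attained, i.e.\ right-continuity of $\tau\mapsto F$. The cutoff is defined with $\prod\ge\tau$, so $K_i(\tau)$ is left-continuous in $\tau$, and the randomized boundary needs a case check. If attainment fails, I will fall back on the inclusion $\{E_{n+1}<\hat\tau\}\subseteq\{F\subseteq A\}\subseteq\{E_{n+1}\le\hat\tau\}$. This suffices for both bounds, because ties occur with probability zero in the upper bound and the lower bound only needs the outer inclusion in the correct direction. I expect this step to be the main technical obstacle.

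\textbf{Step 2: lower bound.} The $E_i$ are the same measurable function of $(P_i,C_i,Y_i,U_i)$, and the $U_i$ are i.i.d.\ and independent of the data. Hence $(E_1,\dots,E_{n+1})$ is exchangeable. The standard quantile lemma then gives
\[
\mathbb{P}(E_{n+1}\le \hat\tau)\ge \lceil (n+1)(1-\alpha)\rceil/(n+1)\ge 1-\alpha .
\]
To prove it, note that $E_{n+1}\le\hat\tau$ holds exactly when $E_{n+1}$ is among the $\lceil (n+1)(1-\alpha)\rceil$ smallest of all $n+1$ scores. By exchangeability, the rank of $E_{n+1}$ is uniform over $\{1,\dots,n+1\}$ when there are no ties, and ties only help. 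Combining this with Step 1 gives the first claim.

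\textbf{Step 3: upper bound.} When the $E_i$ are almost surely distinct, the rank of $E_{n+1}$ is exactly uniform. This gives
\[
\mathbb{P}(E_{n+1}\le\hat\tau)=\frac{\lceil (n+1)(1-\alpha)\rceil}{n+1}\le 1-\alpha+\frac{1}{n+1}.
\]
Because ties, and hence the events $E_{n+1}=\hat\tau$, have probability zero, the coverage event coincides with $\{E_{n+1}\le\hat\tau\}$ up to a null set. This yields the stated upper bound, where we may assume $\alpha\ge 1/(n+1)$ so that $\hat\tau$ is finite; otherwise the bound is trivial.
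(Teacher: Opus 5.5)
Your overall route is the paper's: reduce coverage to $\{E_{n+1}\le\hat\tau\}$ via \Cref{lem:event_equiv}, then use exchangeability of $(E_1,\dots,E_{n+1})$ and uniformity of the rank. Your index $\lceil (n+1)(1-\alpha)\rceil$ is also the one the paper's proof uses. Note that the displayed Algorithm~\ref{alg:adaptiveCF} normalizes by $n_{\text{cal}}$, so it takes the $\lceil n(1-\alpha)\rceil$-th order statistic, which would only guarantee $\lceil n(1-\alpha)\rceil/(n+1)$.

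Your suspicion about attainment is correct, and it is not a corner case. Let $m$ be the number of leading true claims in the $\hat p$-order, let $\Pi_k$ be the product of the top $k$ scores, and set $t:=\Pi_m-U(\Pi_m-\Pi_{m+1})$. The retained count $K(\tau)+\mathds{1}\{U\le\gamma(\tau)\}$ is at least $m+1$ for $\tau\le t$ and at most $m$ for $\tau>t$. So whenever $m<N$ and $\Pi_m>\Pi_{m+1}$, the set $\{\tau:F\subseteq A\}$ equals $(t,1]$, and $E=t$ is \emph{not} attained. The paper's proof of the lemma passes over this: its step ``there exists $\tau^\ast\le\tau$'' fails at $\tau=E_i$. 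So you are always in your fallback case.

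The gap is that your fallback handles the lower bound in the wrong direction. The inclusion $\{F\subseteq A\}\subseteq\{E_{n+1}\le\hat\tau\}$ only bounds coverage from above; that is exactly what Step~3 needs, and that step is fine. For the lower bound you must use $\{E_{n+1}<\hat\tau\}\subseteq\{F\subseteq A\}$, but the quantile lemma controls $\mathbb{P}(E_{n+1}\le\hat\tau)$, not the strict version. ``Ties only help'' is true for $\le$ and false for $<$, and the lower bound is asserted without the distinctness assumption.

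The missing step is $\mathbb{P}(E_{n+1}=\hat\tau,\,F\not\subseteq A_{n+1})=0$, and it comes from the auxiliary randomization, not from exchangeability. $\hat\tau$ depends only on the calibration documents and $U_1,\dots,U_n$. So, conditionally on everything except $U_{n+1}$, $\hat\tau$ is fixed while $E_{n+1}=\Pi_m-U_{n+1}(\Pi_m-\Pi_{m+1})$ is uniform on a nondegenerate interval. If $m=N_{n+1}$, coverage holds for every $\tau$. Hence $\mathbb{P}(F\subseteq A)\ge\mathbb{P}(E_{n+1}\le\hat\tau)\ge 1-\alpha$.

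You also need a nondegeneracy condition excluding $\Pi_m=\Pi_{m+1}$. Suppose a true and a false claim both receive score exactly $0$, with the true one ordered first. Then $E=0$ is not attained, while $F(\hat p,0,U)$ keeps every claim. If every document is either of this kind or entirely true, all scores are $0$ and $\hat\tau=0$. Coverage then equals the probability of an entirely true document, which can be below $1-\alpha$.
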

\vspace{-0.3cm}
Marginal validity ensures that the overall error rate is controlled on average, but this may hide differences between groups, with some subpopulations receiving weaker guarantees. 
To address this, we extend adaptive conformal inference to a group-conditional setting, so that validity is enforced separately for each group. This extension follows the Mondrian conformal framework of \citet{vovk2005algorithmic}. 
Instead of pooling all scores, calibration is restricted to examples from the same group as the test instance. 
For group $k$ with calibration set $\mathcal{I}_k=\{i:g(P_i,C_i)=k\}$, the threshold is 
$\hat{Q}_{1-\alpha}^{(k)}=\operatorname{Quantile}(\{E_i:i\in\mathcal{I}_k\},1-\alpha)$. 
Given a test instance in group $k$, the filter is 
$F_{n,\alpha}^{(k)}(P_{n+1},C_{n+1}) 
=F(\hat{p},\hat{Q}_{1-\alpha}^{(k)},U_{n+1};P_{n+1},C_{n+1})$.

\begin{theorem}[Group-conditional Coverage Guarantee]
\label{thm:group_coverage}
If the samples $\{(P_i, C_i,Y_i)\}_{i=1}^{n+1}$ are exchangeable, 
the group-conditional conformal inference rule satisfies
{\begin{align*}
\mathbb{P}\big(F_{n,\alpha}^{(k)}(P_{n+1},C_{n+1}) \subseteq A_{n+1} 
\given g(P_{n+1},C_{n+1})=k\big) \geq 1-\alpha,
\end{align*}}
for all $k \in \{1,\ldots,K\}$ with $\mathbb{P}(g(P_{n+1},C_{n+1})=k)>0$.
\end{theorem}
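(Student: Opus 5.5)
We follow the standard Mondrian argument: condition on the group of the test point and apply the marginal result (\Cref{thm:marginal_coverage}) within that group. The first ingredient is the event equivalence of \Cref{lem:event_equiv}: for every document and every threshold $\tau$, $F(\hat p,\tau,U_i;P_i,C_i)\subseteq A_i$ holds if and only if $E_i\le\tau$. This uses that the filter is monotone in $\tau$ (larger $\tau$ retains a nested smaller set, including through the randomized boundary step, for fixed $U_i$), so the set of valid thresholds is an up-set whose infimum is $E_i$. We also need that the infimum is attained, i.e.\ right-continuity in $\tau$, which the lemma supplies. Hence the coverage event for the test point is exactly $\{E_{n+1}\le \hat Q^{(k)}_{1-\alpha}\}$.

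Next, fix $k$ with $\mathbb{P}(g(P_{n+1},C_{n+1})=k)>0$. Condition on the group-membership vector $(g(P_i,C_i))_{i=1}^{n+1}$ and on the event $g(P_{n+1},C_{n+1})=k$. Let $\mathcal{I}_k^+=\mathcal{I}_k\cup\{n+1\}$ with $m=|\mathcal{I}_k^+|$. Each $E_i$ is a fixed measurable function of $(P_i,C_i,Y_i,U_i)$, with $\hat p$ trained on separate data and the $U_i$ i.i.d.\ independent of everything else. Exchangeability of the samples therefore implies that, conditionally on the membership vector, the scores $\{E_i:i\in\mathcal{I}_k^+\}$ are exchangeable. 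The key point is that the group label is a function of the sample itself, so permutations within the group preserve the conditioning event.

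Finally we run the quantile argument on the $m$ exchangeable scores. We take $\hat Q^{(k)}_{1-\alpha}$ to be the $\lceil (1-\alpha)m\rceil$-th smallest of the calibration scores in $\mathcal{I}_k$, matching the finite-sample-corrected quantile used in Algorithm~\ref{alg:adaptiveCF}, with the convention $+\infty$ (threshold giving the empty set) when $\lceil(1-\alpha)m\rceil > m-1$. Then $E_{n+1}\le\hat Q^{(k)}_{1-\alpha}$ holds iff $E_{n+1}$ ranks among the smallest $\lceil(1-\alpha)m\rceil$ of the $m$ scores, with ties broken favorably. By exchangeability this has conditional probability at least $\lceil(1-\alpha)m\rceil/m\ge 1-\alpha$. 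Integrating over the membership vector given $g(P_{n+1},C_{n+1})=k$ yields the claim.

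The main obstacle is making the conditional exchangeability step rigorous. We must verify that conditioning on the random group sizes does not break symmetry; this follows because the joint law is invariant under permutations that fix the membership vector. We must also handle the small-group edge case with the infinite-threshold convention. Apart from this, the proof reduces to \Cref{thm:marginal_coverage} applied group-wise.
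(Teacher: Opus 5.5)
Your proposal is correct and is essentially the paper's proof. It reduces coverage to $\{E_{n+1}\le \hat Q^{(k)}_{1-\alpha}\}$ via \Cref{lem:event_equiv}, uses exchangeability of the group-$k$ scores given $g(P_{n+1},C_{n+1})=k$, and applies the rank bound at index $\lceil(1-\alpha)(|\mathcal{I}_k|+1)\rceil$ (ties only helping); your conditioning on the full membership vector and the $+\infty$ convention for small groups just make explicit what the paper leaves implicit, though two caveats apply equally to the paper's argument. First, Algorithm~\ref{alg:adaptiveCF} as written uses the uncorrected quantile (rank $\lceil(1-\alpha)n_{\text{cal}}\rceil$), not the corrected one you attribute to it. Second, for fixed $U_i$ the randomized filter keeps the larger set at each jump in $\tau$ (left- rather than right-continuity), so \Cref{lem:event_equiv} can fail exactly at $\tau=E_i$; the step survives only because $U_{n+1}$ makes $\{E_{n+1}=\hat Q^{(k)}_{1-\alpha}\}$ a null event outside degenerate configurations, e.g.\ when a zero-score true claim is ordered ahead of the first false claim.
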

A key implication of Theorem~\ref{thm:group_coverage} is that it ensures \emph{finite-sample, distribution-free validity within each group}, in contrast to the marginal guarantee of Theorem~\ref{thm:marginal_coverage}, which holds only in aggregate. 
Each group achieves a level $1-\alpha$ based on its own calibration size $n_k$, ensuring that even small groups are covered, albeit with more conservative thresholds and reduced retention. 

\paragraph{{Efficiency.}}
{
Group-conditional validity holds for any estimated score $\hat{p}$, but validity alone does not determine how many claims are retained. A filtering rule may satisfy the coverage requirement while admitting only a small set of claims, yielding outcomes that are valid yet not practically useful. This motivates analyzing efficiency alongside validity.}

{
To characterize the efficiency attainable under the validity constraint, we consider the oracle regime $\hat p = p^{\ast}$ and assume that, given $(P_i,C_i)$, the labels ${y_{i,j}}$ are independent Bernoulli with mean $p_i^{\ast}(c_{i,j})$. This idealized assumption is used only to obtain a tractable benchmark: (1) it factorizes the conditional joint distribution across claims, (2) turns the coverage constraint into a product of claim-wise probabilities, and thus (3) yields an optimal valid rule that reduces to claim-wise thresholding of the oracle score. Under this simplified structure, the oracle filter has a closed-form expression maximizing retention subject to coverage and serves as the efficiency benchmark for our framework.}

In this oracle setting, the resulting conformity scores become exactly uniform on $[0,1]$ 
(Lemma~\ref{lem:uniformity} in Appendix~\ref{app:proofs}). 
Uniformity ensures that Theorem~\ref{thm:group_coverage} achieves maximal retention efficiency: 
coverage is attained precisely at the target level, without conservatism, so no true claims are unnecessarily discarded. 
To formalize this notion of efficiency, we introduce the \emph{retention ratio}, which measures the proportion of claims 
retained under a given factuality-score function and threshold. Formally, let $(P,C,Y)\sim\mathbf{P}$ be a random document (cf. Section~\ref{sec:problem_setup}). For a factuality-score function $p$ and threshold $\tau$, define the retention ratio as
\begin{align} 
\label{eq:retention_rate}
R(p,\tau) := \mathbb{P}\big(c\in F_\tau(p;P,C)\big).
\end{align}

\begin{theorem}[Retention gap with MSE]
\label{thm:retention_mse}
Let $p^*$ denote the oracle factuality-score and $\hat p$ an estimated score. 
Fix a threshold $\tau \in [0,1]$ and let $\Delta := |R(\hat p,\tau) - R(p^*,\tau)|.$
{Suppose the oracle factuality-scores satisfy a margin condition around $\tau$:
there exist constants $\mathfrak{C}>0$ and $\beta\geq0$ such that
\begin{align*}
\mathbb{P}\big(|p^*(P,c)-\tau|\leq\epsilon\big)\leq \mathfrak{C}\epsilon^\beta, \qquad \forall\epsilon>0.    
\end{align*}
Then, for any $\epsilon>0$,}
\begin{align*}
 \Delta \;\leq\; \frac{\mathbb{E}[(\hat p - p^*)^2]}{\epsilon^2} 
+ \mathfrak{C} \epsilon^\beta.   
\end{align*}
Optimizing the right-hand side over $\epsilon$ yields
\begin{align*}
  \Delta \leq \mathfrak{C}'\,\big(\mathbb{E}[(\hat p - p^*)^2]\big)^{\frac{\beta}{\beta+2}},      
\end{align*}
where $\mathfrak{C}'$ depends only on $(\mathfrak{C},\beta)$.
\end{theorem}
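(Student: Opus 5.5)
The plan is to read the retention ratio as a claim-level thresholding event, as the efficiency paragraph indicates: under the independent-Bernoulli oracle regime the optimal valid rule reduces to claim-wise thresholding, so $c\in F_\tau(p;P,C)$ is treated as the event $\{p(P,c)\ge\tau\}$ for a claim $c$ drawn from the document. Then
\begin{align*}
\Delta = \big|\mathbb{P}(\hat p(P,c)\ge\tau) - \mathbb{P}(p^*(P,c)\ge\tau)\big| \le \mathbb{P}\big(\mathbf{1}\{\hat p\ge\tau\}\neq\mathbf{1}\{p^*\ge\tau\}\big),
\end{align*}
because the difference of two indicator expectations is bounded by the probability that the indicators disagree. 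The whole argument is to control this disagreement probability by splitting on the size of $|\hat p - p^*|$ relative to the oracle's distance to the threshold.

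\textbf{Key decomposition.} Fix $\epsilon>0$. If the two indicators disagree, then $\tau$ lies between $\hat p$ and $p^*$ (up to endpoint conventions). This forces one of two cases. In the first, $|p^*-\tau|\le\epsilon$, so the oracle is near the threshold. In the second, $|p^*-\tau|>\epsilon$, and then $|\hat p - p^*|\ge|p^*-\tau|>\epsilon$. A union bound gives
\begin{align*}
\Delta \le \mathbb{P}(|p^*-\tau|\le\epsilon) + \mathbb{P}(|\hat p-p^*|>\epsilon) \le \mathfrak{C}\epsilon^\beta + \frac{\mathbb{E}[(\hat p-p^*)^2]}{\epsilon^2}.
\end{align*}
The first term is bounded by the margin condition and the second by Chebyshev/Markov applied to $(\hat p-p^*)^2$. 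This is exactly the first claimed inequality.

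\textbf{Optimization.} Write $M=\mathbb{E}[(\hat p-p^*)^2]$. Minimize $M\epsilon^{-2}+\mathfrak{C}\epsilon^\beta$ over $\epsilon>0$. Balancing the two terms, or setting the derivative to zero, gives $\epsilon^{\beta+2}=2M/(\beta\mathfrak{C})$. Substituting shows both terms are of order $M^{\beta/(\beta+2)}$, with a constant $\mathfrak{C}'$ depending only on $(\mathfrak{C},\beta)$. The case $\beta=0$ needs separate handling, since the bound is then trivial with exponent $0$ and one can simply take $\mathfrak{C}'=\max(\mathfrak{C},1)$, using $\Delta\le1$. Even simpler, plugging $\epsilon=M^{1/(\beta+2)}$ gives $(1+\mathfrak{C})M^{\beta/(\beta+2)}$ directly, and I would use that choice to avoid the calculus.

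\textbf{Main obstacle.} The delicate step is the first one: justifying that retention is governed by the claim-wise event $\{p(P,c)\ge\tau\}$ rather than by the cumulative-product cutoff $K_i^*(\tau)$ and the boundary randomization. If retention were defined through the product rule, a single score perturbation could shift the cutoff for many claims, and the clean indicator comparison would fail. I would handle this by stating explicitly that, under the Bernoulli-independence benchmark described just before $R(p,\tau)$, $F_\tau(p;P,C)$ is the claim-wise thresholded set with $\tau$ the induced per-claim threshold. The randomized boundary claim only matters on a tie event, which has probability zero, or is absorbed into the $|p^*-\tau|\le\epsilon$ term. The ordering of ties at $\hat p=\tau$ exactly is a measure-zero endpoint issue, covered by taking non-strict inequalities in the case split.
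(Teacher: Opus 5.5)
Your proof is correct and matches the paper's argument step for step. You bound $\Delta$ by the disagreement probability of the indicators $\mathds{1}\{p\ge\tau\}$, split into the events $\{|p^*-\tau|\le\epsilon\}$ and $\{|\hat p-p^*|>\epsilon\}$, apply Markov and the margin condition, and plug in $\epsilon=\mathbb{E}[(\hat p-p^*)^2]^{1/(\beta+2)}$. Like you, the paper simply restricts to this fixed claim-wise threshold rule instead of deriving it from the cumulative-product filter. So your justification that the boundary randomization matters only on a null tie event is unnecessary, and it would in fact be false for the product rule, where the extra claim is kept with probability $\gamma_i(\tau)$.
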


{
The assumption in \Cref{thm:retention_mse} mirrors the classical margin assumption in statistical learning \citep[Eq.~(1.7)]{Audibert2007Fastlearning}.
The exponent $\beta$ quantifies how sharply the oracle factuality-score separates cases around the threshold.
When $\beta>0$, the condition reflects a genuine margin property, whereas $\beta=0$ corresponds to the trivial no-margin case that is included only to unify notation.
Under this assumption, \Cref{thm:retention_mse} shows that the retention gap decreases at a polynomial rate in the estimation error $\mathbb{E}[(\hat p - p^*)^2]$.
The constant $\mathfrak{C}$ influences only the overall scale of the bound. As long as it is finite, the essential conclusion remains the same because a smaller estimation error still guarantees a smaller retention gap.}

{Margin-type conditions are widely used across statistical learning theory, particularly in binary classification and empirical risk minimization. Such assumptions play a central role in deriving meaningful statistical guarantees and are therefore commonly adopted in the literature.
}

\subsection{Multi-LLM Ensemble}
\label{sec:maci_ensemble}
From a statistical perspective, ensembling multiple predictors reduces variance in the bias–variance tradeoff and lowers the MSE, bringing the estimator closer to the oracle benchmark. Yet directly minimizing MSE is not practical because the oracle score is unobservable and binary labels drive predictors toward overconfident outputs, which makes ensembles prone to overfitting. We therefore use a surrogate objective based on the retention decomposition. By keeping recall above a tolerance and reducing the FPR, we directly improve retention while avoiding overconfidence. This surrogate remains aligned with the oracle goal and, as our Figure~\ref{fig:ensemble_justify} shows, also reduces MSE in practice.

Let $\rho := \mathbb{P}(y=1)$ denote the marginal probability that a claim is true. The retention ratio can be decomposed as follows.
\begin{align}
\label{eq:retention_ratio_decomposition}
R(p,\tau) = \rho \cdot \text{TPR}(p,\tau) + (1-\rho)\cdot \text{FPR}(p,\tau),
\end{align}
where
\begin{align*}
\text{TPR}(p,\tau)
= \frac{\mathbb{P}(c\in F_\tau(p;P,C),y=1)}{\rho}, \quad
\text{FPR}(p,\tau) = \frac{\mathbb{P}(c\in F_\tau(p;P,C),y=0)}{1-\rho}.
\end{align*}

Maximizing $R(p,\tau)$ therefore amounts to increasing $\text{TPR}$ while decreasing $\text{FPR}$, 
but the two cannot be optimized simultaneously. 
To prevent trivial solutions that sacrifice recall, we require the true positive rate to remain above a fixed tolerance, 
$\text{TPR}(p,\tau)\ge 1-\delta$ for $\delta\in(0,1)$. 
With $\tau_{p,\delta}$ denoting the $\delta$-quantile of factuality-scores among true-claims, 
we thus focus on minimizing the FPR subject to this constraint:
\begin{align}
\label{eq:surrogate_score}
p^\star = \argmin_{p}\; \mathbb{E}\big[\text{FPR}(p,\tau_{p,\delta})\big].
\end{align}

Since direct fine-tuning toward the oracle $p^*$ is impractical for black-box LLMs, we instead target the surrogate optimum $p^{\star}$ in (\ref{eq:surrogate_score}) by adopting a multi-LLM ensemble strategy.    
Given base factuality-scorers $\{p_m\}_{m=1}^M$ and weights $w=(w_1,\dots,w_M)$, 
the ensemble predictor is
\begin{align*}
p_{\text{ens}}(P,c;w) = \sum_{m=1}^M w_m\,p_m(P,c),
\end{align*}
with $w$ optimized to minimize the empirical FPR under the tolerance constraint 
(see Appendix~\ref{app:MACI_detail} for details). 
Algorithm~\ref{alg:maci} summarizes the MACI framework, which combines group-conditional conformal inference 
with the ensemble to maximize retention while preserving exact coverage.

\vspace{-0.2cm}
\section{Empirical Results}
{\textbf{Dataset Structure and Evaluation Protocol. }
We empirically validate the superiority of MACI using three datasets with distinct characteristics, MedLFQA~\citep{jeong2024olaphimprovingfactualitybiomedical, cherian2024largelanguagemodelvalidity}, WikiBio~\citep{min2023factscorefinegrainedatomicevaluation, cherian2024largelanguagemodelvalidity}, and ExpertQA~\citep{malaviya2024expertqaexpertcuratedquestionsattributed}. 
Following prior work on false-claim filtering~\citep{mohri2024languagemodelsconformalfactuality, cherian2024largelanguagemodelvalidity}, each example is represented as a quadruple $\{(\text{Prompt}, \text{Response}, \text{Claim Set}, \text{Ground Truth})\}$ where the response is decomposed into atomic claims and each claim is annotated with a binary factuality label. For these datasets, we define a representative grouping criterion for each benchmark and a general false-claim risk grouping. In all experiments, the underlying responses are fixed to those released with each dataset (mainly generated by GPT-4 and GPT-3.5-turbo), and every method, including CCI, BCI, and MACI, is applied to exactly the same pool of responses.}

{\textbf{Estimating Factuality-Score. } To estimate factuality-score, we query an ensemble of \(M\) LLMs with a verification instruction that asks them to output a verbalized factuality-score in \([0,1]\) for each \{(\text{Prompt}, \text{Claim})\} pair. In our main experiments, we use \(M=3\) models, Llama-3.3-70B-Instruct, Qwen-2.5-72B-Instruct, and DeepSeek-V3. These factuality-scores serve as base uncertainty signals. MACI aggregates the \(M\) scores via the optimization-based ensemble described in Algorithm~\ref{alg:maci} to obtain a single factuality-score for each claim. Since MACI only needs access to per-claim scalar scores, it can be used as a plug-and-play filter for arbitrary generators beyond the LLMs that produced the benchmark responses.}

A detailed explanation of the datasets, transformation to log space, Fact-checking prompts (for verbalized factuality-score), group criteria, selecting LLMs, and evaluation metrics is in Appendix~\ref{App:Experiment Details}. Additionally, Appendix~\ref{app:results} contains extensive additional experimental results, including comparisons with MultiValid Conformal Inference and Group Clustering methodologies, Conformity score variants, Joint probability Modeling, and discussions on MACI's operation under Covariate Shift.

\vspace{-0.2cm}
\subsection{Overall Performance}
\label{sec:overall_performance}

\begin{table*}[t]
\centering
\scriptsize
\setlength{\tabcolsep}{3pt}
\caption{Group-conditional and marginal coverage, retention ratio for three datasets with distinct characteristics. The marginal results are in the row corresponding to the dataset name, followed by two rows showing the results for two representative grouping criteria for that dataset. Coverage within $1-\alpha \pm 0.01$ are marked with a green dot~$\color{green!50!black}{{\scriptscriptstyle \bullet}}$, while values that fall outside this range (indicating either over or undercoverage), are marked with a red arrow~$\color{red}{\downarrow}$$\color{red}{\uparrow}$. Compared to the two conformal inference baselines, MACI consistently achieves the target coverage in most cases, regardless of the group. Furthermore, its retention ratio is the highest across almost all groups. \textbf{Cov.} denotes coverage, \textbf{Ret.} denotes the retention ratio. The result with the highest retention ratio, achieved without under-coverage, is marked in \textbf{bold}. All reported values are the mean over 30 repeated trials. The performance of CCI is a result of fixing the target coverage ($1-\alpha$).}

\label{tab:ovelall_performance}

\sisetup{
  table-format = 1.2,
  mode = text
}

\begin{tabular*}{\textwidth}{
  l @{\extracolsep{\fill}}
   S S | S S | S S |
   S S | S S | S S |
   S S | S S | S S 
}
\toprule
& \multicolumn{6}{c}{\textbf{Target Coverage: 80\% ($\alpha=0.2$)}} 
& \multicolumn{6}{c}{\textbf{Target Coverage: 90\% ($\alpha=0.1$)}} 
& \multicolumn{6}{c}{\textbf{Target Coverage: 95\% ($\alpha=0.05$)}} \\
\cmidrule(lr){2-7} \cmidrule(lr){8-13} \cmidrule(lr){14-19}

& \multicolumn{2}{c}{BCI} & \multicolumn{2}{c}{CCI} & \multicolumn{2}{c}{MACI} 
& \multicolumn{2}{c}{BCI} & \multicolumn{2}{c}{CCI} & \multicolumn{2}{c}{MACI} 
& \multicolumn{2}{c}{BCI} & \multicolumn{2}{c}{CCI} & \multicolumn{2}{c}{MACI} \\
\cmidrule(lr){2-3} \cmidrule(lr){4-5} \cmidrule(lr){6-7} 
\cmidrule(lr){8-9} \cmidrule(lr){10-11} \cmidrule(lr){12-13}
\cmidrule(lr){14-15} \cmidrule(lr){16-17} \cmidrule(lr){18-19}

\textbf{Group} 
& {\textbf{Cov.}} & {\textbf{Ret.}} & {\textbf{Cov.}} & {\textbf{Ret.}} & {\textbf{Cov.}} & {\textbf{Ret.}}
& {\textbf{Cov.}} & {\textbf{Ret.}} & {\textbf{Cov.}} & {\textbf{Ret.}} & {\textbf{Cov.}} & {\textbf{Ret.}}
& {\textbf{Cov.}} & {\textbf{Ret.}} & {\textbf{Cov.}} & {\textbf{Ret.}} & {\textbf{Cov.}} & {\textbf{Ret.}} \\

\toprule
\textbf{MedLFQA} & { 0.80$\color{green!50!black}{{\scriptscriptstyle \bullet}}$} & {0.06} & {0.81$\color{green!50!black}{{\scriptscriptstyle \bullet}}$} & {0.56} & {0.80$\color{green!50!black}{{\scriptscriptstyle \bullet}}$} & {\textbf{0.71}} & {0.90$\color{green!50!black}{{\scriptscriptstyle \bullet}}$} & {0.02} & {0.90$\color{green!50!black}{{\scriptscriptstyle \bullet}}$} & {0.31} & {0.90$\color{green!50!black}{{\scriptscriptstyle \bullet}}$} & {\textbf{0.50}} & {0.95$\color{green!50!black}{{\scriptscriptstyle \bullet}}$} & {0.01} & {0.95$\color{green!50!black}{{\scriptscriptstyle \bullet}}$} & {0.18} & {0.95$\color{green!50!black}{{\scriptscriptstyle \bullet}}$} & {\textbf{0.30}} \\
\midrule

\multicolumn{19}{l}{\textbf{Medical Content}} \\
Info     & 0.81 $\color{green!50!black}{{\scriptscriptstyle \bullet}}$ & 0.06 & 0.76 $\color{red}{\downarrow}$ & 0.54 & 0.80 $\color{green!50!black}{{\scriptscriptstyle \bullet}}$ & \textbf{0.70} & 0.91 $\color{green!50!black}{{\scriptscriptstyle \bullet}}$ & 0.02 & 0.86 $\color{red}{\downarrow}$ & 0.30 & 0.90 $\color{green!50!black}{{\scriptscriptstyle \bullet}}$ & \textbf{0.48} & 0.96 $\color{green!50!black}{{\scriptscriptstyle \bullet}}$ & 0.01 & 0.93 $\color{red}{\downarrow}$ & 0.18 & 0.95 $\color{green!50!black}{{\scriptscriptstyle \bullet}}$ & \textbf{0.30} \\
Interpret& 0.80 $\color{green!50!black}{{\scriptscriptstyle \bullet}}$ & 0.07 & 0.84 $\color{red}{\uparrow}$ & 0.58 & 0.79 $\color{green!50!black}{{\scriptscriptstyle \bullet}}$ & \textbf{0.69} & 0.89 $\color{green!50!black}{{\scriptscriptstyle \bullet}}$ & 0.03 & 0.93 $\color{red}{\uparrow}$ & 0.33 & 0.90 $\color{green!50!black}{{\scriptscriptstyle \bullet}}$ & \textbf{0.47} & 0.94 $\color{green!50!black}{{\scriptscriptstyle \bullet}}$ & 0.01 & 0.96 $\color{green!50!black}{{\scriptscriptstyle \bullet}}$ & 0.21 & 0.96 $\color{green!50!black}{{\scriptscriptstyle \bullet}}$ & \textbf{0.26} \\
Action   & 0.79 $\color{green!50!black}{{\scriptscriptstyle \bullet}}$ & 0.06 & 0.85 $\color{red}{\uparrow}$ & 0.49 & 0.80 $\color{green!50!black}{{\scriptscriptstyle \bullet}}$ & \textbf{0.73} & 0.90 $\color{green!50!black}{{\scriptscriptstyle \bullet}}$ & 0.02 & 0.92 $\color{red}{\uparrow}$ & 0.27 & 0.90 $\color{green!50!black}{{\scriptscriptstyle \bullet}}$ & \textbf{0.53} & 0.96 $\color{green!50!black}{{\scriptscriptstyle \bullet}}$ & 0.01 & 0.96 $\color{green!50!black}{{\scriptscriptstyle \bullet}}$ & 0.16 & 0.95 $\color{green!50!black}{{\scriptscriptstyle \bullet}}$ & \textbf{0.33} \\
\midrule

\multicolumn{19}{l}{\textbf{False-Claim Risk}} \\
Low      & 0.84 $\color{red}{\uparrow}$ & 0.07 & 0.83 $\color{red}{\uparrow}$ & 0.68 & 0.79 $\color{green!50!black}{{\scriptscriptstyle \bullet}}$ & \textbf{0.78} & 0.94 $\color{red}{\uparrow}$ & 0.03 & 0.91 $\color{green!50!black}{{\scriptscriptstyle \bullet}}$ & 0.41 & 0.89 $\color{green!50!black}{{\scriptscriptstyle \bullet}}$ & \textbf{0.52} & 0.97 $\color{red}{\uparrow}$ & 0.01 & 0.95 $\color{green!50!black}{{\scriptscriptstyle \bullet}}$ & 0.28 & 0.95 $\color{green!50!black}{{\scriptscriptstyle \bullet}}$ & \textbf{0.37} \\
Medium   & 0.83 $\color{red}{\uparrow}$ & 0.06 & 0.81 $\color{green!50!black}{{\scriptscriptstyle \bullet}}$ & 0.66 & 0.79 $\color{green!50!black}{{\scriptscriptstyle \bullet}}$ & \textbf{0.70} & 0.89 $\color{green!50!black}{{\scriptscriptstyle \bullet}}$ & 0.03 & 0.90 $\color{green!50!black}{{\scriptscriptstyle \bullet}}$ & 0.39 & 0.91 $\color{green!50!black}{{\scriptscriptstyle \bullet}}$ & \textbf{0.46} & 0.94 $\color{green!50!black}{{\scriptscriptstyle \bullet}}$ & 0.01 & 0.95 $\color{green!50!black}{{\scriptscriptstyle \bullet}}$ & 0.25 & 0.95 $\color{green!50!black}{{\scriptscriptstyle \bullet}}$ & \textbf{0.31} \\
High     & 0.73 $\color{red}{\downarrow}$ & 0.06 & 0.78 $\color{red}{\downarrow}$ & 0.43 & 0.80 $\color{green!50!black}{{\scriptscriptstyle \bullet}}$ & \textbf{0.64} & 0.88 $\color{red}{\downarrow}$ & 0.01 & 0.89 $\color{green!50!black}{{\scriptscriptstyle \bullet}}$ & 0.22 & 0.89 $\color{green!50!black}{{\scriptscriptstyle \bullet}}$ & \textbf{0.41} & 0.94 $\color{green!50!black}{{\scriptscriptstyle \bullet}}$ & 0.01 & 0.94 $\color{green!50!black}{{\scriptscriptstyle \bullet}}$ & 0.12 & 0.95 $\color{green!50!black}{{\scriptscriptstyle \bullet}}$ & \textbf{0.26} \\
\bottomrule

\\
\toprule
\textbf{WikiBio} & {0.81$\color{green!50!black}{{\scriptscriptstyle \bullet}}$} & {0.02} & {0.79$\color{green!50!black}{{\scriptscriptstyle \bullet}}$} & {0.19} & {0.81$\color{green!50!black}{{\scriptscriptstyle \bullet}}$} & {\textbf{0.43}} & {0.90$\color{green!50!black}{{\scriptscriptstyle \bullet}}$} & {0.01} & {0.89$\color{green!50!black}{{\scriptscriptstyle \bullet}}$} & {0.11} & {0.90$\color{green!50!black}{{\scriptscriptstyle \bullet}}$} & {\textbf{0.25}} & {0.95$\color{green!50!black}{{\scriptscriptstyle \bullet}}$} & {0.01} & {0.93$\color{red}{\downarrow}$} & {0.06} & {0.95$\color{green!50!black}{{\scriptscriptstyle \bullet}}$} & {\textbf{0.13}} \\
\midrule

\multicolumn{19}{l}{\textbf{View Count}} \\
Low & {0.74$\color{red}{\downarrow}$} & {0.03} & {0.79$\color{green!50!black}{{\scriptscriptstyle \bullet}}$} & {0.18} & {0.81$\color{green!50!black}{{\scriptscriptstyle \bullet}}$} & {\textbf{0.36}} & {0.87$\color{red}{\downarrow}$} & {0.01} & {0.88$\color{red}{\downarrow}$} & {0.11} & {0.91$\color{green!50!black}{{\scriptscriptstyle \bullet}}$} & {\textbf{0.21}} & {0.94$\color{green!50!black}{{\scriptscriptstyle \bullet}}$} & {0.01} & {0.92$\color{red}{\downarrow}$} & {0.06} & {0.96$\color{green!50!black}{{\scriptscriptstyle \bullet}}$} & {\textbf{0.11}} \\
Medium & {0.84$\color{red}{\uparrow}$} & {0.02} & {0.78$\color{red}{\downarrow}$} & {0.19} & {0.81$\color{green!50!black}{{\scriptscriptstyle \bullet}}$} & {\textbf{0.46}} & {0.91$\color{green!50!black}{{\scriptscriptstyle \bullet}}$} & {0.01} & {0.88$\color{red}{\downarrow}$} & {0.11} & {0.91$\color{green!50!black}{{\scriptscriptstyle \bullet}}$} & {\textbf{0.24}} & {0.95$\color{green!50!black}{{\scriptscriptstyle \bullet}}$} & {0.01} & {0.92$\color{red}{\downarrow}$} & {0.06} & {0.95$\color{green!50!black}{{\scriptscriptstyle \bullet}}$} & {\textbf{0.12}} \\
High & {0.85$\color{red}{\uparrow}$} & {0.02} & {0.81$\color{green!50!black}{{\scriptscriptstyle \bullet}}$} & {0.20} & {0.81$\color{green!50!black}{{\scriptscriptstyle \bullet}}$} & {\textbf{0.51}} & {0.91$\color{green!50!black}{{\scriptscriptstyle \bullet}}$} & {0.01} & {0.92$\color{red}{\uparrow}$} & {0.12} & {0.91$\color{green!50!black}{{\scriptscriptstyle \bullet}}$} & {\textbf{0.24}} & {0.95$\color{green!50!black}{{\scriptscriptstyle \bullet}}$} & {0.01} & {0.95$\color{green!50!black}{{\scriptscriptstyle \bullet}}$} & {0.07} & {0.96$\color{green!50!black}{{\scriptscriptstyle \bullet}}$} & {\textbf{0.12}} \\
\midrule

\multicolumn{19}{l}{\textbf{False-Claim Risk}} \\
Low & {0.81$\color{green!50!black}{{\scriptscriptstyle \bullet}}$} & {0.03} & {0.80$\color{green!50!black}{{\scriptscriptstyle \bullet}}$} & {0.21} & {0.82$\color{red}{\uparrow}$} & {\textbf{0.40}} & {0.90$\color{green!50!black}{{\scriptscriptstyle \bullet}}$} & {0.01} & {0.90$\color{green!50!black}{{\scriptscriptstyle \bullet}}$} & {0.11} & {0.90$\color{green!50!black}{{\scriptscriptstyle \bullet}}$} & {\textbf{0.23}} & {0.95$\color{green!50!black}{{\scriptscriptstyle \bullet}}$} & {0.01} & {0.93$\color{red}{\downarrow}$} & {0.07} & {0.94$\color{green!50!black}{{\scriptscriptstyle \bullet}}$} & {\textbf{0.17}} \\
Medium & {0.81$\color{green!50!black}{{\scriptscriptstyle \bullet}}$} & {0.02} & {0.78$\color{red}{\downarrow}$} & {0.19} & {0.81$\color{green!50!black}{{\scriptscriptstyle \bullet}}$} & {\textbf{0.42}} & {0.91$\color{green!50!black}{{\scriptscriptstyle \bullet}}$} & {0.01} & {0.89$\color{green!50!black}{{\scriptscriptstyle \bullet}}$} & {0.11} & {0.90$\color{green!50!black}{{\scriptscriptstyle \bullet}}$} & {\textbf{0.25}} & {0.95$\color{green!50!black}{{\scriptscriptstyle \bullet}}$} & {0.01} & {0.93$\color{red}{\downarrow}$} & {0.06} & {0.95$\color{green!50!black}{{\scriptscriptstyle \bullet}}$} & {\textbf{0.12}} \\
High & {0.81$\color{green!50!black}{{\scriptscriptstyle \bullet}}$} & {0.02} & {0.79$\color{green!50!black}{{\scriptscriptstyle \bullet}}$} & {0.18} & {0.81$\color{green!50!black}{{\scriptscriptstyle \bullet}}$} & {\textbf{0.45}} & {0.89$\color{green!50!black}{{\scriptscriptstyle \bullet}}$} & {0.01} & {0.88$\color{red}{\downarrow}$} & {0.11} & {0.90$\color{green!50!black}{{\scriptscriptstyle \bullet}}$} & {\textbf{0.28}} & {0.94$\color{green!50!black}{{\scriptscriptstyle \bullet}}$} & {0.01} & {0.92$\color{red}{\downarrow}$} & {0.06} & {0.96$\color{green!50!black}{{\scriptscriptstyle \bullet}}$} & {\textbf{0.09}} \\
\bottomrule
\\

\toprule
\textbf{ExpertQA} & {0.91} $\color{red}{\uparrow}$ & {0.13} & {0.85} $\color{red}{\uparrow}$ & {0.18} & {0.80} $\color{green!50!black}{{\scriptscriptstyle \bullet}}$ & {\textbf{0.45}} & {0.91} $\color{green!50!black}{{\scriptscriptstyle \bullet}}$ & {0.13} & {0.85} $\color{red}{\downarrow}$ & {0.17} & {0.90} $\color{green!50!black}{{\scriptscriptstyle \bullet}}$ & {\textbf{0.15}} & {0.91} $\color{red}{\downarrow}$ & {0.13} & {0.85} $\color{red}{\downarrow}$ & {0.17} & {0.95} $\color{green!50!black}{{\scriptscriptstyle \bullet}}$ & {\textbf{0.10}} \\
\midrule

\multicolumn{19}{l}{\textbf{Question Domain}} \\
Bio/Med  & 0.92 $\color{red}{\uparrow}$ & 0.14 & 0.86 $\color{red}{\uparrow}$ & 0.18 & 0.82 $\color{red}{\uparrow}$ & \textbf{0.47} & 0.92 $\color{red}{\uparrow}$ & 0.14 & 0.86 $\color{red}{\downarrow}$ & 0.18 & 0.92 $\color{red}{\uparrow}$ & \textbf{0.22} & {0.92} $\color{red}{\downarrow}$ & {0.13} & {0.86} $\color{red}{\downarrow}$ & {0.18} & {0.97} $\color{red}{\uparrow}$ & {\textbf{0.10}} \\
Tech/Sci & 0.91 $\color{red}{\uparrow}$ & 0.14 & 0.86 $\color{red}{\uparrow}$ & 0.17 & 0.81 $\color{green!50!black}{{\scriptscriptstyle \bullet}}$ & \textbf{0.44} & 0.90 $\color{green!50!black}{{\scriptscriptstyle \bullet}}$ & 0.13 & 0.85 $\color{red}{\downarrow}$ & 0.16 & 0.89 $\color{green!50!black}{{\scriptscriptstyle \bullet}}$ & \textbf{0.21} & {0.91} $\color{red}{\downarrow}$ & {0.13} & {0.85} $\color{red}{\downarrow}$ & {0.16} & {0.94} $\color{green!50!black}{{\scriptscriptstyle \bullet}}$ & {\textbf{0.10}} \\
Common   & 0.90 $\color{red}{\uparrow}$ & 0.13 & 0.84 $\color{red}{\uparrow}$ & 0.18 & 0.78 $\color{red}{\downarrow}$ & 0.43 & 0.89 $\color{green!50!black}{{\scriptscriptstyle \bullet}}$ & 0.13 & 0.85 $\color{red}{\downarrow}$ & 0.17 & 0.89 $\color{green!50!black}{{\scriptscriptstyle \bullet}}$ & \textbf{0.21} & {0.89} $\color{red}{\downarrow}$ & {0.14} & {0.84} $\color{red}{\downarrow}$ & {0.17} & {0.95} $\color{green!50!black}{{\scriptscriptstyle \bullet}}$ & {\textbf{0.09}} \\
\midrule

\multicolumn{19}{l}{\textbf{False-Claim Risk}} \\
Low      & 0.95 $\color{red}{\uparrow}$ & 0.13 & 0.85 $\color{red}{\uparrow}$ & 0.31 & 0.81 $\color{green!50!black}{{\scriptscriptstyle \bullet}}$ & \textbf{0.57} & 0.94 $\color{red}{\uparrow}$ & 0.13 & 0.84 $\color{red}{\downarrow}$ & 0.31 & 0.89 $\color{green!50!black}{{\scriptscriptstyle \bullet}}$ & \textbf{0.35} & {0.95} $\color{green!50!black}{{\scriptscriptstyle \bullet}}$ & {0.13} & {0.84} $\color{red}{\downarrow}$ & {0.31} & {0.96} $\color{green!50!black}{{\scriptscriptstyle \bullet}}$ & {\textbf{0.16}} \\
Medium   & 0.91 $\color{red}{\uparrow}$ & 0.13 & 0.87 $\color{red}{\uparrow}$ & 0.18 & 0.81 $\color{green!50!black}{{\scriptscriptstyle \bullet}}$ & \textbf{0.42} & 0.91 $\color{green!50!black}{{\scriptscriptstyle \bullet}}$ & 0.13 & 0.86 $\color{red}{\downarrow}$ & 0.18 & 0.89 $\color{green!50!black}{{\scriptscriptstyle \bullet}}$ & \textbf{0.23} & {0.91} $\color{red}{\downarrow}$ & {0.13} & {0.86} $\color{red}{\downarrow}$ & {0.18} & {0.96} $\color{green!50!black}{{\scriptscriptstyle \bullet}}$ & {\textbf{0.11}} \\
High     & 0.87 $\color{red}{\uparrow}$ & 0.13 & 0.85 $\color{red}{\uparrow}$ & 0.12 & 0.79 $\color{green!50!black}{{\scriptscriptstyle \bullet}}$ & \textbf{0.37} & 0.87 $\color{red}{\downarrow}$ & 0.13 & 0.85 $\color{red}{\downarrow}$ & 0.12 & 0.90 $\color{green!50!black}{{\scriptscriptstyle \bullet}}$ & \textbf{0.15} & {0.87} $\color{red}{\downarrow}$ & {0.13} & {0.85} $\color{red}{\downarrow}$ & {0.12} & {0.95} $\color{green!50!black}{{\scriptscriptstyle \bullet}}$ & {\textbf{0.07}} \\
\bottomrule
\end{tabular*}
\end{table*}

Table~\ref{tab:ovelall_performance} compares the group-conditional coverage and retention ratio for three datasets with distinct characteristics against two prominent baselines in the false-claim filtering field: BCI and CCI. MACI demonstrates robust performance in settings where the other two baselines falter, consistently achieving the target coverage across most groups while maintaining the highest retention ratio. 

\paragraph{Comparison with BCI.} BCI only guarantees marginal coverage via a single threshold, which leads to alternating overcoverage and undercoverage depending on the difficulty differences between groups. This phenomenon is particularly evident in the results for the False-Claim Risk groups across all three datasets and the View Count groups in WikiBio. Moreover, BCI uses the conformity score of a document $D$ using a single extremal claim. This design discards potentially useful factuality information from the remaining claims and  makes the document-level conformity score overly sensitive to estimation error in that single claim. Since conformal calibration will meet the target coverage, such sensitivity tends to induce conservative thresholds, which in turn filter out many true-claims. In contrast, MACI uses a multiplicative, cumulative conformity score that more directly reflects the plausibility that the entire retained set is jointly factual. By aggregating information across multiple retained claims rather than relying on a single extremal claim, MACI provides a finer-grained conformity, enabling higher retention while maintaining stable coverage.

\paragraph{Comparison with CCI.} CCI alleviates the overly conservative thresholding of BCI by employing an adaptive threshold function $g_{\text{CCI}}$ (Theorem 3.1 of~\cite{cherian2024largelanguagemodelvalidity}). It adjusts the threshold at the sample level, yielding a higher retention ratio. However, its conformity score for $D$ still relies on a single extremal claim. As a result, even with a well-optimized $g_{\text{CCI}}$, this single-claim–based conformity score remains sensitive to estimation error in that single extremal claim and may lead to conservative thresholds, thereby limiting retention gains. Moreover, $g_{\text{CCI}}$ operates within a linear feature-space framework, which imposes additional limitations when applying it to some grouping scenarios. The grouping criteria for each dataset, such as Medical Content or False-Claim Risk, are complex semantic functions implemented based on prompt and claim parsing. It is therefore difficult to capture such criteria using the simple linear functions and features proposed by CCI, leading to undercoverage or overcoverage. In contrast to $g_{\text{CCI}}$, our grouping function $g$ is an arbitrary measurable function that partitions the space into a finite number of groups, therefore unaffected by the complexity of grouping criteria in threshold calculations. The constraints inherent in $g_{\text{CCI}}$ are also reflected in its retention ratio. $g_{\text{CCI}}$ risks calculating an overly conservative threshold depending on how well it captures the grouping criteria. This leads to a lower retention ratio compared to MACI, which calculates group-conditional thresholds directly. CCI proposes improving retention by applying per-sample adaptive error rates that reflect each sample’s characteristics. The method learns $\alpha$ as a function and lowers $\alpha$ for each sample instead of merely exceeding a minimum retention target. However, this adaptive $\alpha$ differs from our objective. Our goal is to design filtering rules that guarantee, with high probability, that the filtered set contains no false-claims, ensuring applicability in real high-stakes domains. Adapting $\alpha$ to raise retention produces filtering rules that are difficult to deploy in such settings. Figure~\ref{fig:adaptive_cci} compares CCI with adaptive $\alpha$ and MACI with $\alpha$ = 0.1 on WikiBio. The upper plot sets CCI’s target retention to MACI’s average retention and outputs a per-sample adaptive $\alpha$, showing that CCI’s $\alpha$ values are generally higher than MACI’s fixed small $\alpha$. The lower table reports actual coverage and retention for both methods. CCI raises retention to nearly match MACI by increasing $\alpha$ overall, but the actual coverage is lower than MACI because the target $\alpha$ is larger.

\begin{figure}[htbp]
    \centering

    \begin{minipage}[c]{0.3\linewidth}
        \centering
        \vspace{0pt}
        \includegraphics[width=\linewidth]{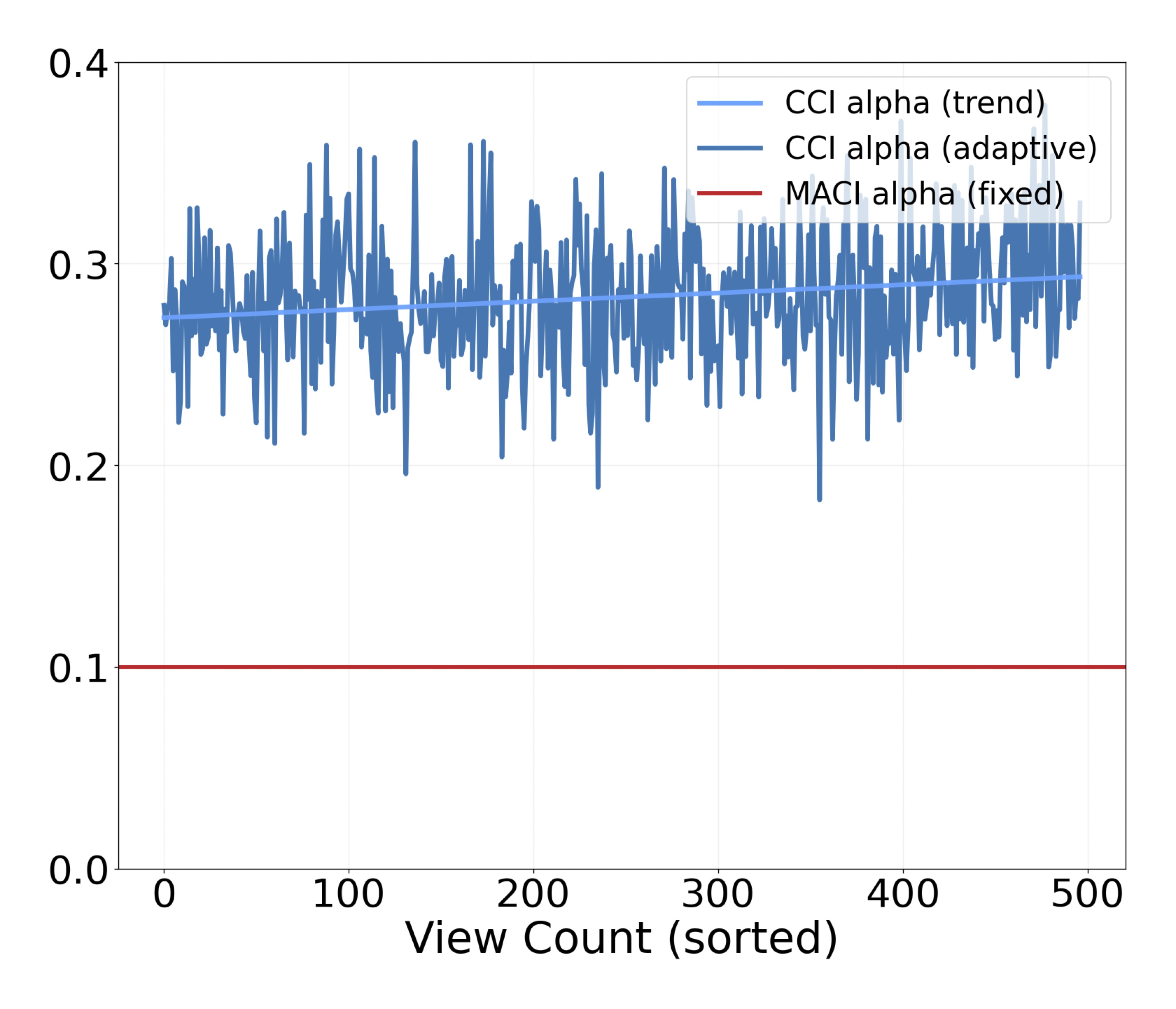}
    \end{minipage}
    \hspace{0.5em}
    \begin{minipage}[c]{0.55\linewidth}
        \small
        \sisetup{
          table-format = 1.2,
          mode = text
        }
        \centering
        \begin{tabular}{l S S | S S}
            \toprule
            & \multicolumn{2}{c}{CCI ($\alpha$=adap.)} & \multicolumn{2}{c}{MACI ($\alpha$=0.1)} \\
            \cmidrule(lr){2-3} \cmidrule(lr){4-5}
            \textbf{Group} & {\textbf{Cov.}} & {\textbf{Ret.}} & {\textbf{Cov.}} & {\textbf{Ret.}} \\
            \toprule
            \textbf{WikiBio} & {0.72} & {0.26} & {0.90$\color{green!50!black}{{\scriptscriptstyle \bullet}}$} & {\textbf{0.28}} \\
            \midrule
            \multicolumn{5}{l}{\textbf{View Count}} \\
            Low    & 0.71 & 0.24 & 0.91$\color{green!50!black}{{\scriptscriptstyle \bullet}}$ & \textbf{0.29} \\
            Medium & 0.73 & 0.26 & 0.90$\color{green!50!black}{{\scriptscriptstyle \bullet}}$ & \textbf{0.27} \\
            High   & 0.74 & 0.28 & 0.90$\color{green!50!black}{{\scriptscriptstyle \bullet}}$ & \textbf{0.31} \\
            \bottomrule
        \end{tabular}
    \end{minipage}

    \caption{Performance comparison of CCI (adaptive $\alpha$) and MACI measured by View Count on the WikiBio dataset. The horizontal axis of the left graph is the sample index sorted by View Count, and the vertical axis is $\alpha$. The left graph shows the variation in $\alpha$ when CCI (adaptive $\alpha$) sets its target retention ratio to MACI's average retention ratio. CCI (adaptive $\alpha$) trades off higher $\alpha$ to achieve a higher retention ratio, and the table below shows the resulting decrease in coverage.}
    \label{fig:adaptive_cci}
\end{figure}

\subsection{Multi-LLM Ensemble}
In Sections~\ref{sec:oracle}, \ref{sec:aci}, and \ref{sec:maci_ensemble}, we have discussed the importance of the factuality-score $\hat{p}$. Consequently, we first verify to what extent our proposed multi-LLM ensemble and optimization method (Section~\ref{sec:maci_ensemble}) improve the performance of $\hat{p}$ compared to a single-LLM, and how the retention ratio is correspondingly improved. We first find that models exhibit significant disagreements in false-claim detection. Figure~\ref{fig:ensemble_justify} (a) shows the high Jaccard distance~\citep{Jaccarddistance} between the sets of claims that different LLMs classify as false. The analysis is performed exclusively on the subset of MedLFQA claims with false ground truth. This high distance implies that the models have different patterns for detecting false-claims, suggesting significant potential for performance enhancement through an ensemble. Figure~\ref{fig:ensemble_justify} (b) shows that the FPR and MSE are sequentially improved from the single-LLM to the arithmetic mean ensemble, and finally to MACI. Figure~\ref{fig:ensemble_justify}~(b) also shows that an improvement in FPR is consistently accompanied by an improvement in MSE, and demonstrates that MACI's $\hat{p}$ is a superior estimator of factuality-score. Figure~\ref{fig:ensemble_justify}~(c) demonstrates that the corresponding sequential increase in the retention ratio aligns with our objective of maximizing it by enhancing the quality of $\hat{p}$.

\begin{figure}[t]
    \centering
    \includegraphics[width=\columnwidth]{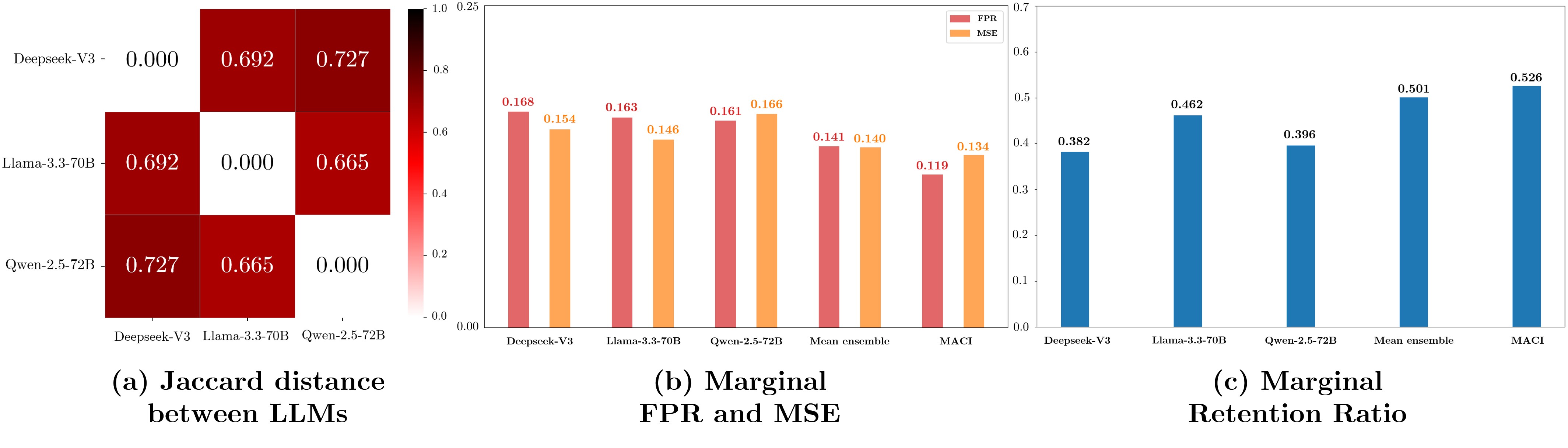}
    \caption{(a) shows the high Jaccard distance between different LLMs’ predictions on claims known to be false in MedLFQA, indicating diverse false-claim detection patterns that support using an ensemble. (b) demonstrates the sequential improvement in FPR from a single-LLM and a simple arithmetic mean ensemble to our proposed MACI. It also demonstrates that as the FPR improves, the MSE also improves in practice; (c) demonstrates that as the FPR and MSE improve, the retention ratio also increases. A more detailed analysis is in Figure~\ref{fig:Ensemble_full_analysis}.}
    \label{fig:ensemble_justify}
\end{figure}

\subsection{MACI under covariate shift}
In deployed systems, the calibration queries used to fit MACI's thresholds and the test-time queries need not follow the same covariate distribution. To study this covariate-shift setting, we construct an explicit shift on MedLFQA by ranking responses with a SelfCheck-based factuality score and assigning more factual easy queries to the calibration pool and more hallucination-prone hard queries to the test pool. This induces a clear mismatch between the calibration and test distributions while keeping the underlying annotation procedure fixed. Following the density-ratio correction idea of \cite{tibshirani2020conformalpredictioncovariateshift}, we consider a variant, MACI-DRE, that estimates the density ratio
\(
r(x) = p_X^{(t)}(x) / p_X^{(s)}(x)
\)
between test and calibration covariates using a lightweight classifier on deployment-time features (summary statistics of SelfCheck scores and prompt/response lengths). We then resample the calibration set according to the estimated ratios and run the original MACI pipeline on this resampled set, without changing any internal components of MACI. Table~\ref{tab:MACI-MACI-DRE} summarizes the results on MedLFQA under the covariate shift. MACI exhibits under- or over-coverage in some false-claim risk groups, whereas MACI-DRE moves group-wise coverage closer to the target level while maintaining comparable retention. This shows that MACI can be combined with lightweight density-ratio estimation to mitigate covariate shift in practice. Detailed explanation of MACI-DRE is described in Appendix~\ref{app:results}.

\begin{table*}[h]
\centering
\scriptsize
\setlength{\tabcolsep}{3pt}
\caption{{Comparison between MACI and MACI-DRE. MACI-DRE reduces miscoverages in the scenarios where under- or over-coverage occurs by utilizing resampled calibration thresholds.}}
\label{tab:MACI-MACI-DRE}

\sisetup{
  table-format = 1.2,
  mode = text,
  table-align-text-post = false, 
  table-fixed-width = true,
  table-column-width = 0.7cm 
}

\begin{tabular*}{\textwidth}{
  l @{\extracolsep{\fill}}
  S S | S S |
  S S | S S |
  S S | S S 
}
\toprule
& \multicolumn{4}{c}{\textbf{Target Coverage: 80\% ($\alpha=0.2$)}} 
& \multicolumn{4}{c}{\textbf{Target Coverage: 90\% ($\alpha=0.1$)}} 
& \multicolumn{4}{c}{\textbf{Target Coverage: 95\% ($\alpha=0.05$)}} \\
\cmidrule(lr){2-5} \cmidrule(lr){6-9} \cmidrule(lr){10-13}

& \multicolumn{2}{c}{MACI} & \multicolumn{2}{c}{MACI-DRE} 
& \multicolumn{2}{c}{MACI} & \multicolumn{2}{c}{MACI-DRE} 
& \multicolumn{2}{c}{MACI} & \multicolumn{2}{c}{MACI-DRE} \\
\cmidrule(lr){2-3} \cmidrule(lr){4-5} 
\cmidrule(lr){6-7} \cmidrule(lr){8-9} 
\cmidrule(lr){10-11} \cmidrule(lr){12-13}

\textbf{Group} 
& {\textbf{Cov.}} & {\textbf{Ret.}} & {\textbf{Cov.}} & {\textbf{Ret.}} 
& {\textbf{Cov.}} & {\textbf{Ret.}} & {\textbf{Cov.}} & {\textbf{Ret.}} 
& {\textbf{Cov.}} & {\textbf{Ret.}} & {\textbf{Cov.}} & {\textbf{Ret.}} \\

\toprule

\multicolumn{13}{l}{\textbf{MedLFQA: False-Claim Risk}} \\
Low      & 0.68 & 0.83 & 0.76 & 0.72 & 0.85 & 0.57 & 0.93 & 0.35 & 0.94 & 0.34 & 0.95 & 0.29 \\
Medium   & 0.65 & 0.77 & 0.84 & 0.56 & 0.82 & 0.55 & 0.94 & 0.27 & 0.87 & 0.42 & 0.96 & 0.19 \\
High     & 0.77 & 0.62 & 0.75 & 0.67 & 0.88 & 0.39 & 0.89 & 0.38 & 0.92 & 0.27 & 0.93 & 0.28 \\

\bottomrule
\end{tabular*}
\end{table*}

\subsection{Time Cost}

Time efficiency is critical for real-time filtering. We compare MACI with baselines across two phases: Factuality-Score Generation, where factuality-scores are created, and Calibration, where parameters and thresholds are optimized and calculated. We exclude the negligible filtering time. Table~\ref{tab:time_comparison} reports the costs on the WikiBio dataset. The Factuality-Score Generation phase employs Llama-3.3-70B via the OpenRouter. The results indicate that sampling-based methods suffer from high latency due to repeated response generation (SelfCheck) or knowledge graph construction (FSC-KG). In contrast, MACI achieves the lowest total wall-clock time by utilizing a single-pass scoring and a streamlined calibration process that avoids the complex parameter search of CCI.

\begin{table}[h]
\centering
\small
\caption{Time-Cost Comparison. Wall-Clock time is calculated as: calibration time + (score generation time $\times$ \# test samples). Values represent the end-to-end time for 500 WikiBio test samples. Note that CCI's score generation time is identical to SelfCheck as it relies on the same sampling process, and sampling-based methods do not require a separate calibration phase.}
\label{tab:time_comparison}
\begin{tabular}{c c c c c}
\toprule
\textbf{Phase} & \textbf{SelfCheck} & \textbf{FSC-KG} & \textbf{CCI} & \textbf{MACI} \\
\midrule
Factuality-Score (s) & 3.25 ± 0.43 & 19.30 ± 2.81 & 3.25 ± 0.43 & \textbf{1.20 ± 0.13} \\
Calibration (s) & — & — & 10.33 ± 1.18 & \textbf{3.24 ± 0.65} \\
{Wall-Clock Time (s)} & — & — & {1643.91} & {\textbf{598.98}} \\
\bottomrule
\end{tabular}
\end{table}

\vspace{-0.5cm}
\section{Conclusions}
\vspace{-0.1cm}
We reformulate conformal inference through a multiplicative filtering structure, providing a framework for false-claim filtering with finite-sample, distribution-free guarantees. Our analysis reveals how deviations from the oracle factuality-score impact retention, motivating the use of ensemble methods to narrow this gap. Building on these insights, we develop MACI, which uses ensemble-based factuality-scores and group-conditional calibration to provide group-conditional coverage guarantees. Experiments demonstrate that MACI achieves user-specified coverage while substantially improving factual claim retention and running more efficiently than existing methods, offering a practical solution for deploying LLMs in high-stakes applications.

\clearpage
\bibliographystyle{apalike}
\bibliography{MACI}

\clearpage
\appendix
\section*{Appendix}
\paragraph{Overview of Appendices.}
Appendix~\ref{app:proofs} contains the proofs of the main theoretical results that were omitted from the paper.  
Appendix~\ref{app:methods} provides additional methodological details, including precise definitions of the ensemble objective and empirical quantities used in our framework.  
Appendix~\ref{app:background} reviews background material on conformal inference and its adaptation to false-claim filtering. 
Appendix~\ref{App:Experiment Details} reports implementation details, datasets, and evaluation metrics for our numerical experiments, together with supplementary results.
Appendix~\ref{app:results} contains the additional experimental results.
Appendix~\ref{app:LLM} reports the use of a Large Language Model for our research.

\section{Proofs of Main Results}
\label{app:proofs}
\begin{lemma}
\label{lem:event_equiv}
For each $i \in \{1,\ldots,n\}$, each threshold $\tau \in [0,1]$, 
and each auxiliary randomization $U_i \sim \mathrm{Unif}(0,1)$, we have
\begin{align*}
\{E_i \leq \tau\} \iff \{F(\hat{p},\tau,U_i;P_i,C_i) \subseteq A_i\}.
\end{align*}
\end{lemma}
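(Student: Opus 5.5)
The plan is to reduce the equivalence to two facts about the map $\tau \mapsto F(\hat p,\tau,U_i;P_i,C_i)$ with $(P_i,C_i,U_i)$ held fixed. The first is \emph{monotonicity}: $\tau_1\le\tau_2$ implies $F(\hat p,\tau_2,U_i;P_i,C_i)\subseteq F(\hat p,\tau_1,U_i;P_i,C_i)$. The second is \emph{right-continuity} in the set-valued sense: for every $\tau$ there is $\eta>0$ such that $F(\hat p,\tau',U_i;P_i,C_i)=F(\hat p,\tau,U_i;P_i,C_i)$ for all $\tau'\in[\tau,\tau+\eta)$. Given these, the set $T_i:=\{\tau\in[0,1]: F(\hat p,\tau,U_i;P_i,C_i)\subseteq A_i\}$ is an up-set in $[0,1]$, by monotonicity, since subsets of $A_i$ stay subsets of $A_i$. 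It is therefore an interval with right endpoint $1$ and left endpoint $E_i=\inf T_i$. The direction ``$\Leftarrow$'' is then immediate: if $F(\hat p,\tau,U_i;P_i,C_i)\subseteq A_i$, then $\tau\in T_i$ and so $E_i\le\tau$. For ``$\Rightarrow$'', the case $E_i<\tau$ follows from the up-set property. In the case $E_i=\tau$, take $\tau_m\downarrow E_i$ with $\tau_m\in T_i$. Right-continuity gives $F$ at $E_i$ equal to $F$ at $\tau_m$ for large $m$, so $E_i\in T_i$ and the infimum is attained. One should also check $T_i\neq\emptyset$, which holds because at $\tau=1$ only claims with $\hat p=1$ can be kept. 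If that is problematic, I would adopt the convention $\inf\emptyset=1$, or note that $\tau$ slightly above $1$ retains nothing. The boundary case $\tau=1$ should be checked against the paper's conventions.

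For monotonicity, the randomized filter is the nested family $\{c_{i,\pi_i(j)}: j\le M_i(\tau)\}$, where $M_i(\tau)=K_i(\tau)+\mathbf 1\{U_i\le\gamma_i(\tau)\}$. Here $K_i$ and $\gamma_i$ are built from $\hat p$ exactly as in the oracle construction of Section~\ref{sec:oracle}, with $\Pi_k$ the cumulative products of sorted $\hat p$ values. So it suffices to show that $M_i(\tau)$ is nonincreasing in $\tau$. On each interval $(\Pi_{k+1},\Pi_k]$ the index $K_i(\tau)=k$ is constant. Moreover $\gamma_i(\tau)=(\Pi_k-\tau)/(\Pi_k-\Pi_{k+1})$ decreases linearly from $1$, as $\tau\downarrow\Pi_{k+1}$, to $0$ at $\tau=\Pi_k$. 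Hence $\mathbf 1\{U_i\le\gamma_i(\tau)\}$ is nonincreasing within the interval and $M_i\in\{k,k+1\}$ there. Crossing to the next interval below gives $K_i=k+1$ with $M_i\ge k+1$, so monotonicity holds across interval boundaries as well. Degenerate intervals, where $\Pi_k=\Pi_{k+1}$ so that $\gamma_i=0$, are handled by the convention.

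Right-continuity is the main obstacle, because $M_i$ is piecewise constant with jumps whose direction of continuity must be checked. Within $(\Pi_{k+1},\Pi_k]$ the indicator $\mathbf 1\{U_i\le\gamma_i(\tau)\}$ switches off where $\gamma_i(\tau)<U_i$, which is the set $\tau>\Pi_k-U_i(\Pi_k-\Pi_{k+1})$. That set is open on the left, so the indicator is right-continuous. At the endpoint $\tau=\Pi_k$ itself, $K_i=k$ and $\gamma_i=0$, so $M_i=k$ almost surely since $U_i>0$. Just to the right, $\tau\in(\Pi_k,\Pi_{k-1}]$ gives $K_i=k-1$ with $\gamma_i$ near $1$, so $M_i=k$ for $U_i<1$, which again matches. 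I would therefore state the lemma almost surely in $U_i$, excluding the null events $U_i\in\{0,1\}$ and ties $U_i=\gamma_i(\tau)$ at finitely many breakpoints. Since there are finitely many breakpoints, $M_i$ is a right-continuous nonincreasing step function, and the infimum of the up-set $T_i$ is attained, completing the equivalence.
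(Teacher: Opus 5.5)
Your skeleton matches the paper's proof: monotonicity makes $T_i$ an up-set, and ``$\Leftarrow$'' is immediate. You are also right that ``$\Rightarrow$'' at $E_i=\tau$ needs the infimum to be attained. The paper's proof passes over this when it asserts that the definition of the infimum gives some $\tau^\ast\le\tau$ in the set. But your right-continuity claim, which is meant to supply attainment, is false. Under the paper's convention, the boundary claim is kept iff $U_i\le\gamma_i(\tau)$. On $(\Pi_{k+1},\Pi_k]$ this means it is kept exactly when $\tau\le t_k:=\Pi_k-U_i(\Pi_k-\Pi_{k+1})$. The indicator $\mathbf{1}\{\tau\le t_k\}$ equals $1$ at $t_k$ and $0$ immediately to its right. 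So it is left-continuous at $t_k$, not right-continuous: a zero-set open on the left is exactly what breaks right-continuity.

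Hence the infimum is generically \emph{not} attained. Let $j$ be the position, in $\hat p$-order, of the first false claim, and suppose $\Pi_{j-1}>\Pi_j$. Then $T_i=(t_{j-1},1]$ and $E_i=t_{j-1}$, yet $F(\hat p,E_i,U_i;P_i,C_i)$ still contains that false claim. For example, a single false claim with $\hat p=0.6$ is kept for $\tau\le 1-0.4U_i$ and dropped afterwards. So $E_i=1-0.4U_i$, while $F(\hat p,E_i,U_i;P_i,C_i)=\{c\}\not\subseteq A_i=\emptyset$. In any such document, ``$\Rightarrow$'' fails at $\tau=E_i$ for every $U_i\in(0,1)$. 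No continuity argument can give the lemma as a deterministic statement over all $\tau$, because that statement is false.

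What can be proved, and what the coverage theorems actually use, is the equivalence for each fixed $\tau$, almost surely. The event $E_i=\tau$ forces $U_i=\gamma_i(\tau)$ at the boundary index, which is null when $\Pi_{j-1}>\Pi_j$. Off that event, ``$\Rightarrow$'' reduces to the case $E_i<\tau$, which your up-set argument handles. This suffices because the lemma is applied at $\tau=\hat Q_{1-\alpha}$, which is independent of $U_{n+1}$.

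Alternatively, switching the convention to $U_i<\gamma_i(\tau)$ makes your right-continuity argument correct, with one degenerate exception at $\tau=0$. Suppose at least two claims have $\hat p=0$. Then $K_i(0)=N_i$ and the vanishing-denominator convention give $F(\hat p,0,U_i;P_i,C_i)=C_i$. Every small $\tau>0$, however, keeps only the claims up to the first zero-score one. So the equivalence can fail at $\tau=0$ under either convention.

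Finally, your fallback $\inf\emptyset=1$ would itself break the equivalence at $\tau=1$ when a claim with $\hat p=1$ is false. With the standard $\inf\emptyset=+\infty$, the case $T_i=\emptyset$ is consistent automatically, so you never need $T_i$ to be nonempty.
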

\begin{proof}
Fix $i \in \{1,\ldots,n\}$, a threshold $\tau \in [0,1]$, and a randomization variable $U_i \sim \mathrm{Unif}(0,1)$.  
By the definition of $E_i$,
\begin{align*}
E_i = \inf \big\{\tau \in [0,1] : F(\hat{p}, \tau, U_i; P_i, C_i) \subseteq A_i \big\}.
\end{align*}

\noindent
($\Rightarrow$) Suppose $E_i \leq \tau$.  
Then, by the definition of the infimum, there exists $\tau^\ast \leq \tau$ such that
\begin{align*}
F(\hat p, \tau^\ast, U_i; P_i, C_i) \subseteq A_i.
\end{align*}
Since the retained set $F(\hat p, t, U_i; P_i, C_i)$ is monotone non-increasing in $\tau$, we have
\begin{align*}
F(\hat p, \tau, U_i; P_i, C_i) \subseteq A_i.
\end{align*}

\noindent
($\Leftarrow$) Conversely, suppose that
\begin{align*}
F(\hat p, \tau, U_i; P_i, C_i) \subseteq A_i.
\end{align*}
Then $\tau$ belongs to the set
\begin{align*}
\{\tau \in [0,1] : F(\hat p, \tau, U_i; P_i, C_i) \subseteq A_i \}.
\end{align*}
Hence, by the definition of the infimum, we obtain $E_i \leq \tau$.
Combining the two directions establishes the desired equivalence. That is,
\begin{align*}
\{E_i \leq \tau\} \;\iff\; \{F(\hat p, \tau, U_i; P_i, C_i) \subseteq A_i\}.
\end{align*}    
\end{proof}

\subsection{Proof of \Cref{thm:marginal_coverage}}
The proof follows the standard argument for marginal coverage in conformal prediction and is restated here in our setting.

\textbf{Lower bound.}  
By Algorithm~\ref{alg:adaptiveCF} and \Cref{lem:event_equiv}, the event that all retained claims are factual is written as
\begin{align*}
\{F_{n,\alpha}(P_{n+1},C_{n+1}) \subseteq A_{n+1}\}
\;\;\Longleftrightarrow\;\;
\{E_{n+1} \leq \hat Q_{1-\alpha}\}.
\end{align*}
Since the samples $(P_i,C_i,Y_i)$ are exchangeable and the randomizations $U_i$ are i.i.d., 
the conformity scores $\{E_1,\ldots,E_{n+1}\}$ are themselves exchangeable. 
\begin{align*}
\mathbb{P}\big(E_{n+1} \leq \hat{Q}_{1-\alpha}\big) \;\geq\; 1-\alpha.
\end{align*}
Using the equivalence above, we obtain
\begin{align*}
\mathbb{P}\big(F_{n,\alpha}(P_{n+1},C_{n+1}) \subseteq A_{n+1} \big)\;\geq\; 1-\alpha,
\end{align*}
which proves the marginal coverage lower bound.

\medskip
\textbf{Upper bound.}  
Assume the conformity scores $\{E_i\}_{i=1}^{n+1}$ are distinct with probability one, eliminating the possibility of ties. 
Denote their order statistics by $E_{(1)},\ldots, E_{(n+1)}$, which under this condition form a strictly increasing sequence almost surely. 
Let $k=\lceil(1-\alpha)(n+1)\rceil$. 
By construction,
\begin{align*}
\{F_{n,\alpha}(P_{n+1},C_{n+1}) \subseteq A_{n+1} \}
\;\;\Longleftrightarrow\;\;
\{E_{n+1} \leq E_{(k)}\}.
\end{align*}
Since the conformity scores are exchangeable and distinct, the rank of $E_{n+1}$ is uniformly distributed on $\{1,\ldots,n+1\}$. 
It follows that
\begin{align*}
\mathbb{P}(E_{n+1} \leq E_{(k)}) 
= \frac{k}{n+1} 
= \frac{\lceil(1-\alpha)(n+1)\rceil}{n+1}.
\end{align*}
Finally, since
\begin{align*}
\frac{\lceil(1-\alpha)(n+1)\rceil}{n+1} \leq 1-\alpha+\frac{1}{n+1},
\end{align*}
we conclude that
\begin{align*}
\mathbb{P}\big(F_{n,\alpha}(P_{n+1},C_{n+1}) \subseteq A_{n+1} \big)
\;\leq\; 1-\alpha+\frac{1}{n+1},
\end{align*}
which establishes the upper bound.

\subsection{Proof of \Cref{thm:group_coverage}}
Fix $k \in \{1,\ldots,K\}$ with $\mathbb{P}(g(P_{n+1},C_{n+1})=k)>0$. 
By Algorithm~\ref{alg:adaptiveCF} and \Cref{lem:event_equiv},
\begin{align*}
\{F_{n,\alpha}^{(k)}(P_{n+1},C_{n+1}) \subseteq A_{n+1}\}
\iff 
\{E_{n+1}^{(k)} \leq \hat Q^{(k)}_{1-\alpha}(\{E_i^{(k)}\}_{i\in\mathcal{I}_k})\},
\end{align*}
where $\mathcal{I}_k=\{i\in[n]:g(P_i,C_i)=k\}$.  

Condition on $g(P_{n+1},C_{n+1})=k$. 
Then the conformity scores 
$\{E_i^{(k)}: i\in \mathcal{I}_k\}\cup\{E_{n+1}^{(k)}\}$ 
are exchangeable. Let $m=|\mathcal{I}_k|$ and set 
$r=\lceil(1-\alpha)(m+1)\rceil$.  
If $E_{(1)}^{(k)} \leq \cdots \leq E_{(m+1)}^{(k)}$ are the order statistics, then
\begin{align*}
\{E_{n+1}^{(k)} \leq \hat Q^{(k)}_{1-\alpha}(\{E_i^{(k)}\})\}
\iff 
\{E_{n+1}^{(k)} \leq E_{(r)}^{(k)}\}.
\end{align*}

By exchangeability, $E_{n+1}^{(k)}$ is equally likely to occupy any of the $m+1$ ranks. 
If ties occur at the cutoff, the event $\{E_{n+1}^{(k)} \leq E_{(r)}^{(k)}\}$ only becomes more likely. 
Therefore,
\begin{align*}
\mathbb{P}(E_{n+1}^{(k)} \leq E_{(r)}^{(k)} \mid g(P_{n+1},C_{n+1})=k)
\;\geq\; \frac{r}{m+1} \;\geq\; 1-\alpha.
\end{align*}
This completes the proof.

\begin{lemma}[Uniformity under oracle factuality-score]
\label{lem:uniformity}
If $\hat{p}=p^*$, then conditionally on $(P_i,C_i)$ 
the conformity score $E_i$ is uniformly distributed on $[0,1]$.
\end{lemma}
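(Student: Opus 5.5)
The plan is to compute the conditional distribution function of $E_i$ directly and show that $\mathbb{P}(E_i\le\tau\mid P_i,C_i)=\tau$ for every $\tau\in[0,1]$. We work under the setting described in the Efficiency paragraph: $\hat p=p^*$, and given $(P_i,C_i)$ the labels $y_{i,j}$ are independent Bernoulli$(p_i^*(c_{i,j}))$. The randomization $U_i\sim\mathrm{Unif}(0,1)$ is independent of the labels.

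The first step is to apply \Cref{lem:event_equiv} with $\hat p=p^*$, which converts the question into a coverage probability:
\begin{align*}
\mathbb{P}(E_i\le\tau\mid P_i,C_i)=\mathbb{P}\big(F(p^*,\tau,U_i;P_i,C_i)\subseteq A_i\mid P_i,C_i\big).
\end{align*}
Write $K=K_i^*(\tau)$ and $\gamma=\gamma_i(\tau)$. The retained set consists of the top $K$ claims if $U_i>\gamma$, and of the top $K+1$ claims if $U_i\le\gamma$. By conditional independence of the labels, the probability that the top $k$ claims (ordered by $\pi_i$) are all true equals $\Pi_k$. Since $U_i$ is independent of the labels, the coverage probability equals
\begin{align*}
(1-\gamma)\,\Pi_K+\gamma\,\Pi_{K+1}=\Pi_K-\gamma\,(\Pi_K-\Pi_{K+1}).
\end{align*}
Substituting the definition of $\gamma$, this equals $\tau$.

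The remaining work is checking that $\gamma$ is well defined and lies in $[0,1]$, and handling the boundary cases. I will proceed as follows.
\begin{itemize}
\item By maximality of $K$ together with the conventions $\Pi_0=1$ and $\max\emptyset=0$, we have $\Pi_{K+1}<\tau\le\Pi_K$ whenever $\tau>0$. Hence the denominator $\Pi_K-\Pi_{K+1}$ is strictly positive, so $\gamma\in[0,1]$ and the zero-denominator convention is never invoked for $\tau>0$.
\item The case $K=0$ is covered, since $\tau\le 1=\Pi_0$.
\item The case $\tau=0$ is trivial: the identity $\mathbb{P}(E_i\le 0\mid P_i,C_i)=0$ follows from the formula together with $\Pi_{N_i+1}=0$.
\item Ties in $p^*$ are harmless, because $\Pi_k$ does not depend on how ties are broken within $\pi_i$.
\end{itemize}
Having $\mathbb{P}(E_i\le\tau\mid P_i,C_i)=\tau$ on $[0,1]$ gives conditional uniformity.

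The main obstacle is the boundary index $K=N_i$. Here the rule ``retain the top $K+1$ claims'' refers to a nonexistent claim, and the convention $\Pi_{N_i+1}=0$ must be made consistent with the event $\{F\subseteq A_i\}$. If that branch simply retained all $N_i$ claims, the coverage would be $\Pi_{N_i}\neq\tau$. I would resolve this by reading the convention literally: the fictitious $(N_i+1)$-st claim has factuality probability $0$, so selecting that branch counts as a coverage failure (for instance, the rule abstains or flags the document). The computation above then goes through verbatim. I would state this interpretation explicitly as part of the definition of the randomized rule, because it is what makes the calibration exact.
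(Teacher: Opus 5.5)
Your proposal is correct and takes the same route as the paper: use Lemma~\ref{lem:event_equiv} to turn $\mathbb{P}(E_i\le\tau\mid P_i,C_i)$ into the conditional coverage of the randomized oracle filter, then show this coverage equals $\tau$. Where the paper only asserts that identity ``by construction,'' you verify it via $(1-\gamma)\Pi_K+\gamma\Pi_{K+1}=\tau$ and make explicit the conditional-independence assumption it needs. You also correctly observe that the $K_i^*(\tau)=N_i$ branch must count as a coverage failure, since otherwise the coverage is $\Pi_{N_i}\neq\tau$; the paper's proof silently relies on this. One minor point you inherit from the paper: the set of valid thresholds is open at its left endpoint, so the event equivalence in Lemma~\ref{lem:event_equiv} can fail on $\{E_i=\tau\}$, but that event has conditional probability zero, so your probability identity is unaffected.
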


\begin{proof}
Recall from Section~\ref{sec:oracle} that $F_\tau^{\mathrm{oracle}}(P_i,C_i)$ denotes the set of retained claims 
at threshold $\tau$. 
The conformity score is defined as the smallest threshold at which 
the retained set is entirely factual:
\begin{align*}
E^*_i := \inf\{\tau \in [0,1]: F_\tau^{\mathrm{oracle}}(P_i,C_i) \subseteq A_i\}.
\end{align*}

By construction of the randomized oracle filter, 
the retention rule is calibrated to satisfy
\begin{align*}
\mathbb{P}\big(F_\tau^{\mathrm{oracle}}(P_i,C_i)\subseteq A_i \,\big|\, P_i,C_i\big) = \tau.
\end{align*}
This equality holds for every $\tau \in [0,1]$. Consequently,
\begin{align*}
\mathbb{P}(E^*_i \leq \tau \mid P_i,C_i) = \tau.
\end{align*}
Equivalently, the conditional distribution function of $E_i$ is
\begin{align*}
G_{E^*_i \mid (P_i,C_i)}(\tau) = \tau.
\end{align*}
Thus, conditional on $(P_i,C_i)$, we have $E^*_i \sim \mathrm{Unif}(0,1)$.
\end{proof}

\subsection{Proof of \Cref{thm:retention_mse}}
For notational simplicity, we suppress the dependence on $(P,c)$ and write $\hat{p}:=\hat{p}(P,c)$ and $p^{\ast}:=p^{\ast}(P,c)$ throughout the proof. We also emphasize that the argument below focuses on a simplified thresholding rule with a fixed threshold $\tau$. 
Rather than analyzing the full data-dependent and multiplicative filtering procedure used in MACI, this proof considers a basic decision rule $h_p=\mathds{1}\{p\ge\tau\}$ to clarify how estimation error in $p$ affects the retention ratio.

Recall from (\ref{eq:retention_rate}) that the retention ratio can be written as 
\begin{align*}
    R(p,\tau) = \mathbb{P}\big(c \in F_\tau(p;P,C)\big).
\end{align*}
In the thresholding case where $F_\tau(p;P,C)=\{c: p \ge \tau\}$, this simplifies to $R(p,\tau)=\mathbb{E}[h_p]$ with $h_p := \mathds{1}\{p \ge \tau\}$. Therefore, the retention gap is
\begin{align*}
\Delta 
&= |R(\hat p,\tau) - R(p^*,\tau)| \\
&= \big|\mathbb{E}[h_{\hat p}] - \mathbb{E}[h_{p^*}]\big| \\
&= \big|\mathbb{E}[h_{\hat p} - h_{p^*}]\big| 
   \;\le\; \mathbb{E}\big[|h_{\hat p} - h_{p^*}|\big],
\end{align*}
where we used the inequality $|\mathbb{E}[Z]| \leq \mathbb{E}[|Z|]$.  

Since $h_{\hat p},h_{p^*}\in\{0,1\}$, their absolute difference equals $1$ 
precisely when the two thresholding decisions disagree. Hence
\begin{align*}
\Delta 
&\le \mathbb{P}\big(h_{\hat p} \neq h_{p^*}\big) \\
&= \mathbb{P}\big((\hat p-\tau)(p^*-\tau)<0\big).
\end{align*}

Fix $\epsilon>0$.  
If $h_{\hat p}\neq h_{p^*}$, then one score is above $\tau$ and the other below.  
This can only happen in two cases:
\begin{enumerate}
\item $p^*$ lies within $\epsilon$ of $\tau$, i.e.\ $|p^*-\tau|\le \epsilon$.
\item $p^*$ is farther than $\epsilon$ from $\tau$ but $\hat p$ crosses the threshold, which forces $|\hat p-p^*|>\epsilon$.
\end{enumerate}
Therefore,
\begin{align*}
\{h_{\hat p}\neq h_{p^*}\}
\subseteq \{|p^*-\tau|\le \epsilon\} \cup \{|\hat p-p^*|>\epsilon\}.
\end{align*}
Taking probabilities and applying the union bound gives
\begin{align*}
\Delta 
&\leq \underbrace{\mathbb{P}\big(|\hat p - p^*| > \epsilon\big)}_{(\mathbb{I})} 
   + \underbrace{\mathbb{P}\big(|p^*-\tau|\leq \epsilon\big)}_{(\mathbb{II})}.
\end{align*}

For $(\mathbb{I})$, by Markov’s inequality,
\begin{align*}
\mathbb{P}\big(|\hat p - p^*| > \epsilon\big) 
&\le \frac{\mathbb{E}[(\hat p - p^*)^2]}{\epsilon^2}.
\end{align*}
For $(\mathbb{II})$, assumption (margin condition) ensures
\begin{align*}
\mathbb{P}\big(|p^*-\tau|\le \epsilon\big) \le \mathfrak{C}\epsilon^\beta.
\end{align*}
Hence for every $\epsilon>0$,
\begin{align*}
\Delta 
&\le \frac{\mathbb{E}[(\hat p - p^*)^2]}{\epsilon^2} + \mathfrak{C}\epsilon^\beta.
\end{align*}

Let $V:=\mathbb{E}[(\hat p - p^*)^2]$.  
The inequality
\begin{align*}
\Delta \le \frac{V}{\epsilon^2} + \mathfrak{C}\epsilon^\beta
\end{align*}
holds for any $\epsilon>0$.  
Hence, we may minimize the right-hand side over $\epsilon$.  
Balancing the two contributions by setting $\epsilon = V^{1/(\beta+2)}$ (up to constant factors) yields
\begin{align*}
\Delta 
&\le \mathfrak{C}'\,V^{\frac{\beta}{\beta+2}},
\end{align*}
where $\mathfrak{C}'$ depends only on $(\mathfrak{C},\beta)$.  
This completes the proof.

\begin{algorithm}[h!]
\caption{Adaptive Conformal Inference (ACI)}
\label{alg:adaptiveCF}
\begin{algorithmic}[1]
\State \textbf{Input:} 
Calibration dataset $\mathcal{D}_{\text{cal}}=\{(P_i, C_i, Y_i)\}_{i=1}^{n_{\text{cal}}}$ of size $n_{\text{cal}}$, 
a new instance $(P_{n+1}, C_{n+1})$, 
a black-box classifier $\hat{p}$, 
and an error level $\alpha \in (0,1)$.
\vspace{0.5em}
\State \textbf{Output:} 
Filtered set $F_{n,\alpha}(P_{n+1}, C_{n+1})$ that satisfies marginal coverage.
\vspace{0.5em}
\Statex \textbf{--- Calibration Phase ---}
\For{$i=1,\dots,n_{\text{cal}}$}
    \State Sample $U_i \sim \mathrm{Unif}(0,1)$.
    \State Let $A_i = \{\,c_{i,j}\in C_i : y_{i,j}=1\,\}$ be the set of factual claims.
    \State Compute conformity score
    \begin{align*}
       E_i = \inf \{ \tau \in [0,1] : 
    F(\hat{p}, \tau, U_i; P_i, C_i) \subseteq A_i \}.     
    \end{align*}
\EndFor
\State Compute empirical quantile
\begin{align*}
\hat{Q}_{1-\alpha} = \inf \Big\{ q \in [0,1] : 
\frac{1}{n_{\text{cal}}} \sum_{i=1}^{n_{\text{cal}}} \mathds{1}\{E_i \leq q\} \geq 1-\alpha \Big\}.    
\end{align*}
\Statex \textbf{--- Filtering Phase ---}
\vspace{0.3em}
\State Sample $U_{n+1} \sim \mathrm{Unif}(0,1)$.
\State Construct the conformal filter
\begin{align*}
 F_{n,\alpha}(P_{n+1}, C_{n+1}) 
= F(\hat{p}, \hat{Q}_{1-\alpha}, U_{n+1}; P_{n+1}, C_{n+1}).   
\end{align*}
\State \textbf{Return} $F_{n,\alpha}(P_{n+1}, C_{n+1})$.
\end{algorithmic}
\end{algorithm}

\begin{algorithm}[h!]
\caption{Multi-LLM Adaptive Conformal Inference (MACI)}
\label{alg:maci}
\begin{algorithmic}[1]
\State \textbf{Input:} 
Data $\mathcal{D}_{\text{opt}} = \{(P_i, C_i, Y_i)\}_{i=1}^{n_{\text{opt}}}$ and 
$\mathcal{D}_{\text{cal}} = \{(P_i, C_i, Y_i)\}_{i=1}^{n_{\text{cal}}}$, 
of sizes $n_{\text{opt}}$ and $n_{\text{cal}}$ respectively, 
a new instance $(P_{n+1}, C_{n+1})$,
a collection of base classifiers $\{\hat{p}_m\}_{m=1}^M$, 
a grouping function $g$, 
an error level $\alpha \in (0,1)$, 
and a TPR tolerance $\delta \in (0,1)$.
\vspace{0.5em}
\State \textbf{Output:} 
A filtered subset $\hat{F}_{n,\alpha}^{(k_{\text{test}})}(P_{n+1}, C_{n+1})$ that satisfies group-conditional coverage.

\vspace{1em}
\Statex \textbf{--- Optimization and Calibration Phase ---}
\vspace{0.5em}

\For{each group $k \in \{1, \dots, K\}$}
    \State Define the optimization indices 
    $\mathcal{I}_{\text{opt},k} = \{\, i \in \mathcal{D}_{\text{opt}} : g(P_i, C_i)=k \,\}$.
    \State For any candidate weights $w$, compute the empirical threshold
    \begin{align*}
        \widehat{\tau}_{\hat{p}_{\text{ens}}(w),\delta} 
        := \inf\Big\{t \in \mathbb{R} : 
        \frac{1}{N_{1,k}} \sum_{i \in \mathcal{I}_{\text{opt},k}}
        \sum_{c \in C_i : y=1} 
        \mathds{1}\{\hat{p}_{\text{ens}}(P_i,c;w) \leq t\} \;\geq\; \delta \Big\},
    \end{align*}
    $\quad\;$ where $N_{1,k} = \sum_{i \in \mathcal{I}_{\text{opt},k}}|\{c_{i,j} \in C_i : y_{i,j}=1\}|$ is the number of true-claims in group $k$.
    \State Compute the optimal ensemble weights $w_k^*$ by solving
    \begin{align*}
        w_k^* 
        &= \arg\min_{w} \; 
        \frac{1}{|\mathcal{I}_{\text{opt},k}|} 
        \sum_{i \in \mathcal{I}_{\text{opt},k}}
        \widehat{\operatorname{FPR}}_i\big(\hat{p}_{\text{ens}}(w), \widehat{\tau}_{\hat{p}_{\text{ens}}(w),\delta}\big) \\[0.5em]
        &\text{subject to} \quad
        \frac{1}{|\mathcal{I}_{\text{opt},k}|} 
        \sum_{i \in \mathcal{I}_{\text{opt},k}}
        \widehat{\operatorname{TPR}}_i\big(\hat{p}_{\text{ens}}(w), \widehat{\tau}_{\hat{p}_{\text{ens}}(w),\delta}\big) 
        \;\geq\; 1 - \delta .
    \end{align*}
\EndFor

\vspace{0.5em}

\For{each group $k \in \{1,\dots,K\}$}
    \State Define the calibration indices 
    $\mathcal{I}_{\text{cal},k} = \{\, i \in \mathcal{D}_{\text{cal}} : g(P_i, C_i)=k \,\}$.
    \State Define the group-conditional ensemble classifier 
    $\hat{p}_k^*(c) = \hat{p}_{\text{ens}}(c; w_k^*)$.
    \For{each $i \in \mathcal{I}_{\text{cal},k}$}
        \State Sample $U_i \sim \mathrm{Unif}(0,1)$.
        \State Let $A_i = \{\, c_{i,j} \in C_i : y_{i,j}=1 \,\}$ be the set of factual claims.
        \State Compute the conformity score
        \begin{align*}
        E_i = \inf \{ \tau \in [0,1] : 
            F(\hat{p}_k^*, \tau, U_i; P_i, C_i) \subseteq A_i \}.    
        \end{align*}
    \EndFor
    \State Compute the group-conditional empirical quantile
    \begin{align*}
        \hat{Q}_{1-\alpha}^{(k)} 
        = \inf \Big\{ q \in [0,1] : 
        \frac{1}{|\mathcal{I}_{\text{cal},k}|} 
        \sum_{i \in \mathcal{I}_{\text{cal},k}} \mathds{1}\{E_i \leq q\} 
        \geq 1-\alpha \Big\}.   
    \end{align*}
\EndFor

\vspace{0.5em}
\Statex \textbf{--- Filtering Phase ---}
\vspace{0.5em}
\State Determine the group of the new instance: 
$k_{\text{test}} = g(P_{n+1}, C_{n+1})$.
\State Retrieve the corresponding optimal weights $w_{k_{\text{test}}}^*$ 
and threshold $\hat{Q}_{1-\alpha}^{(k_{\text{test}})}$.
\State Define the group-conditional ensemble classifier 
$\hat{p}_{k_{\text{test}}}^*(c) = \hat{p}_{\text{ens}}(c; w_{k_{\text{test}}}^*)$.
\State Sample $U_{n+1} \sim \mathrm{Unif}(0,1)$.
\State Construct the adaptive conformal filter:
\begin{align*}
     \hat{F}_{n,\alpha}^{(k_{\text{test}})}(P_{n+1}, C_{n+1}) 
    = F(\hat{p}_{k_{\text{test}}}^*, \hat{Q}_{1-\alpha}^{(k_{\text{test}})}, U_{n+1}; P_{n+1}, C_{n+1}).   
\end{align*}
\State \textbf{Return} $\hat{F}_{n,\alpha}^{(k_{\text{test}})}(P_{n+1}, C_{n+1})$.
\end{algorithmic}
\end{algorithm}

\section{Methodological Details}
\label{app:methods}

\subsection{Details of Multi-LLM Ensemble Objective}
\label{app:MACI_detail}

This appendix provides the formal definitions of the proxy objective, empirical quantities, 
and optimization procedure underlying our multi-LLM ensemble (MACI), complementing the description in \Cref{sec:maci_ensemble}.

Recall from (\ref{eq:retention_ratio_decomposition}) that
\begin{align*}
R(p,\tau) = \rho \cdot \operatorname{TPR}(p,\tau) + (1-\rho)\cdot \operatorname{FPR}(p,\tau).
\end{align*}
Because $\operatorname{TPR}$ and $\operatorname{FPR}$ cannot be optimized simultaneously, 
we enforce a tolerance $\delta \in (0,1)$ such that $\operatorname{TPR}(p,\tau)\ge 1-\delta$. 
Let $\tau_{p,\delta}$ denote the $\delta$-quantile of factuality-scores among true-claims. 
The population-level objective is then
\begin{align*}
p^\star = \argmin_{p}\; \mathbb{E}\big[\operatorname{FPR}(p,\tau_{p,\delta})\big],
\end{align*}
where $\operatorname{FPR}$ is the false positive rate at threshold $\tau_{p,\delta}$.

Since the distribution $\mathbf{P}$ is unknown, we approximate the objective using a hold-out set 
$\mathcal{D}_{\text{opt}}=\{(P_\ell,C_\ell,Y_\ell)\}_{\ell=1}^{n_{\text{opt}}}$. 
Let $N_1=\sum_{\ell=1}^{n_{\text{opt}}}|\{c_{\ell,j}\in C_\ell:y_{\ell,j}=1\}|$ be the total number of true-claims. 
The empirical $\delta$-quantile among true-claims is
\begin{align*}
\widehat{\tau}_{p,\delta} 
= \inf\Big\{t : \frac{1}{N_1}\sum_{\ell=1}^{n_{\text{opt}}}
\sum_{c \in C_\ell:y=1}\mathds{1}\{p(P_\ell,c)\le t\}\ge \delta \Big\}.
\end{align*}
For document $(P_\ell,C_\ell,Y_\ell)$, the empirical FPR is defined as
\begin{align*}
\widehat{\operatorname{FPR}}_\ell(p,\tau) 
= \frac{|\{c \in F_\tau(p;P_\ell,C_\ell):y=0\}|}{1\vee|\{c\in C_\ell:y=0\}|},
\end{align*}
where $a\vee b=\max(a,b)$. The empirical optimization problem is then
\begin{align*}
\hat p = \arg\!\min_{p}\; \frac{1}{n_{\text{opt}}}\sum_{\ell=1}^{n_{\text{opt}}}
\widehat{\operatorname{FPR}}_\ell(p,\widehat{\tau}_{p,\delta}).
\end{align*}

Direct fine-tuning toward $p^*$ is infeasible in black-box LLMs. 
Instead, let $\{p_m\}_{m=1}^M$ denote base factuality-scores and 
$w=(w_1,\dots,w_M)$ a non-negative weight vector summing to one. 
The ensemble predictor is
\begin{align*}
p_{\text{ens}}(P,c;w) = \sum_{m=1}^M w_m\,p_m(P,c),
\end{align*}
and the weights are optimized by
\begin{align*}
w^\star = \arg\!\min_w \; \frac{1}{n_{\text{opt}}}\sum_{\ell=1}^{n_{\text{opt}}}
\widehat{\operatorname{FPR}}_\ell(p_{\text{ens}}(\cdot;w),\,\widehat{\tau}_{p_{\text{ens}}(\cdot;w),\delta}).
\end{align*}

\section{Background}
\label{app:background}
\subsection{Conformal Inference}
\label{sec:CI}
Conformal Inference (CI) \citep{papadopoulos2002inductive, vovk2005algorithmic, lei2017distributionfreepredictiveinferenceregression, angelopoulos2022gentleintroductionconformalprediction}  is a statistical framework that provides distribution-free uncertainty quantification for any machine learning model. Under the sole assumption that the data is exchangeable, a condition satisfied by i.i.d. data, CI generates a prediction set $C(X_{n+1})$ for a new test point $X_{n+1}$ that contains the true label $Y_{n+1}$ with a user-specified probability of at least $1-\alpha$. This is achieved through a calibration process using a hold-out calibration dataset, $D_{\text{calib}}$. The core mechanism involves defining a non-conformity score function, $S(\cdot, \cdot)$, which measures how poorly a data point $(X_i, Y_i)$ conforms to a model's predictions. For instance, a common score for a probabilistic classifier with a score function $\hat{p}$ is $S(X_i, Y_i) = 1 - \hat{p}(Y_i \mid X_i)$, where a higher score indicates that the true label was assigned a lower probability. These scores are computed for each sample in $D_{\text{calib}}$, and a threshold $\hat{\tau}$ is determined by taking the value at the $\lceil (|D_{\text{calib}}| + 1)(1 - \alpha) \rceil$-th position in the sorted list of scores. For a new test point $X_{n+1}$, the prediction set is constructed by including all possible labels $y \in \mathcal{Y}$ whose non-conformity score does not exceed this threshold, i.e., $C(X_{n+1}) = \{ y \in \mathcal{Y} \mid S(X_{n+1}, y) \le \hat{\tau} \}$. This construction provides the powerful finite-sample marginal coverage guarantee, $\mathbb{P}(Y_{n+1} \in C(X_{n+1})) \ge 1-\alpha$, offering a robust foundation for building reliable machine learning systems.

\subsection{False-Claim Filtering with Conformal Inference}
\label{App:BCI}

\cite{mohri2024languagemodelsconformalfactuality} adapt the CI framework to filter false-claims from Large Language Model (LLM) outputs, proposing a foundational method we refer to as Basic Conformal Inference (BCI).
The process begins with a set of $n$ prompts, $\{P_i\}_{i=1}^n$. For each prompt $P_i$, an LLM generates a response $R_i$, which is then segmented into a collection of independent claims, $C_i = \{c_{i,1}, \dots, c_{i,N_i}\}$. Each claim $c_{i,j}$ is associated with a ground-truth binary label $y_{i,j} \in \{0, 1\}$, where $y_{i,j}=1$ denotes a true-claim and $y_{i,j}=0$ denotes a false-claim. Thus, each data point is a tuple $D_i = (P_i,C_i, Y_i)$, and the dataset $\{D_i\}_{i=1}^n$ is assumed to be drawn i.i.d. from an unknown joint distribution \textbf{P}. A score function, $p$, assigns a confidence-score $p(c_{i,j})$ to each claim. This score function can be constructed in various ways, such as by directly querying an LLM~\citep{tian2023justaskcalibrationstrategies, guan-etal-2024-language} or by capturing frequency~\citep{wang2023selfconsistencyimproveschainthought, manakul2023selfcheckgptzeroresourceblackboxhallucination}. Their formal goal is to output a filtered set of claims, $F_{n,\alpha}(P_{i},C_i) \subseteq C_i$, that contains no false-claim with user-specified error rate, i.e.,
\begin{align*}
    \mathbb{P}\big(F_{n,\alpha}(P_{n+1},C_{n+1}) \nsubseteq A_{n+1}\big) \leq \alpha,
\end{align*}
They define the filtered set as all claims whose scores exceed the calibrated global threshold $\hat{\tau}$, that is, $F_{\hat\tau}(P_i,C_i) := \{\, c_{i,j}\in C_i : p(P_i,c_{i,j}) \geq \hat{\tau}\,\}.$ The threshold $\hat{\tau}$ is determined by the conformal procedure. Specifically, they define a non-conformity score for each sample $(C_i, Y_i)$ as the lowest possible confidence-score threshold $\tau$ that ensures all retained claims are true :
\begin{align*}
    S(C_i,Y_i) := \inf\{\tau\in[0,1] : F_\tau(P_i,C_i)\subseteq A_i\}
\end{align*}

This non-conformity score is computed for all samples in the calibration set $D_{\text{calib}}$. The global threshold $\hat{\tau}$ is then set to the $(1-\alpha)$ quantile of these non-conformity scores, as detailed in Section~\ref{sec:CI}. They show that if the data samples are exchangeable, this procedure satisfies the desired probability guarantee.

\section{Experiment Details}
\label{App:Experiment Details}

\paragraph{Transformation to Log Space.}
Our implementation performs all multiplicative computations in log-space.
Given a claim-level factuality-score $\hat p(P,c)\in[0,1]$ and a small $\epsilon>0$,
we apply the complement-based transform
\[
\ell(P,c) := -\log\!\bigl(1-\hat p(P,c)+\epsilon\bigr),
\]
Let $\pi$ denote the ordering induced by $\hat p$ (ascending, so low-confidence claims are considered first),
and define the complement-product
\[
\Pi_k(P,C) := \prod_{j=1}^k \bigl(1-\hat p(P,c_{\pi(j)})+\epsilon\bigr).
\]
Then we have the exact log-product identity
\[
-\log \Pi_k(P,C) \;=\; \sum_{j=1}^k \ell(P,c_{\pi(j)}).
\]
Hence the budget rule used in our filter,
$\sum_{j=1}^k \ell(P,c_{\pi(j)}) \le \tau$,
is equivalent to $\Pi_k(P,C)\ge e^{-\tau}$.
Because the mapping $x\mapsto -\log x$ is strictly monotone on $(0,1]$,
this implementation of log-space preserves the induced selection sets and conformal quantiles,
while improving numerical stability by avoiding product underflow.

\begin{figure}[]
\centering
\includegraphics[width=\columnwidth]{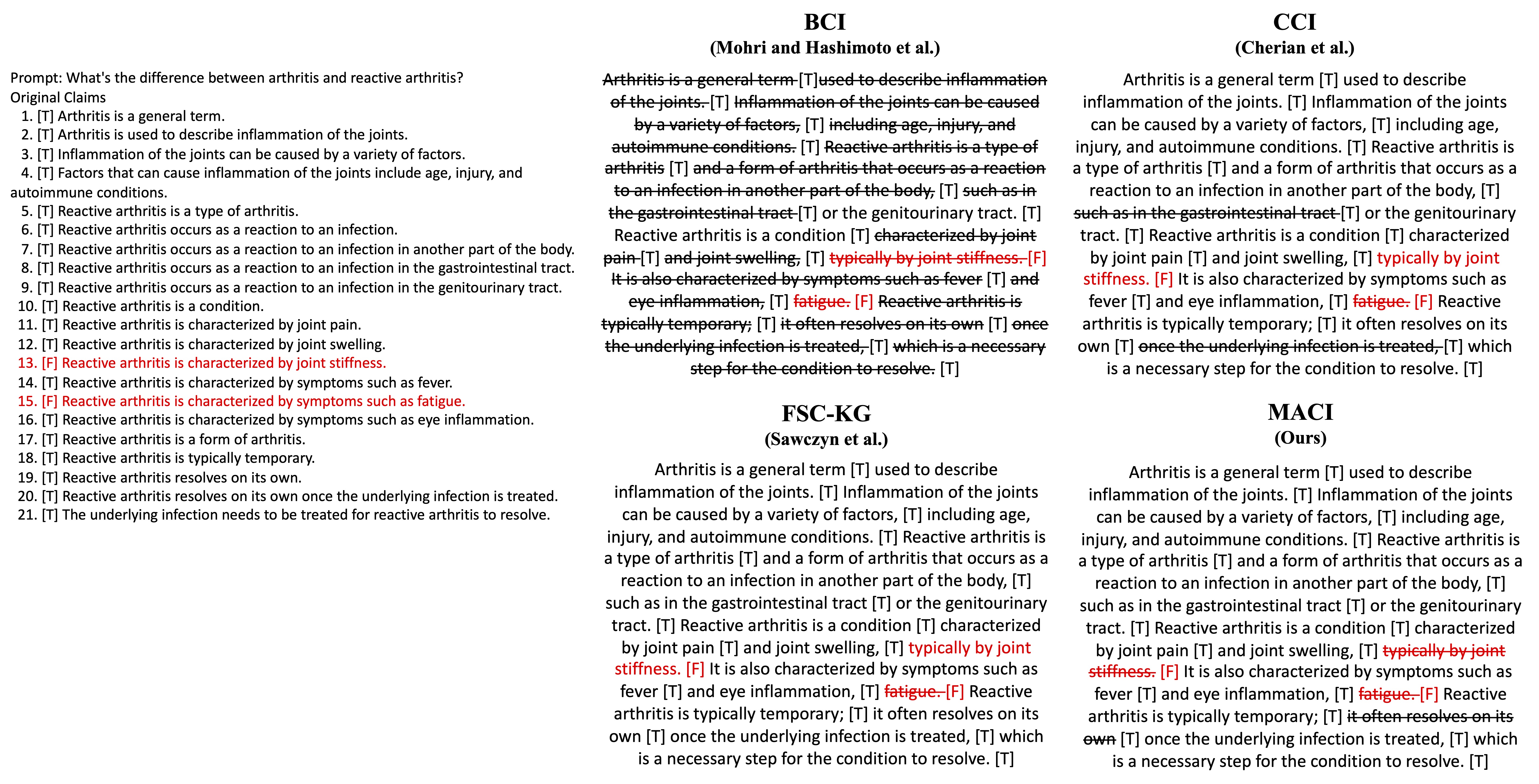}
\caption{An example of independently decomposed claims in MedLFQA and the aggregated results of four methods that filter the false-claims of those claims. BCI yields conservative results, while CCI and FSC-KG show high retention but fail to filter out all false-claims, whereas MACI successfully filters out all false-claims.}
\label{fig:real_case}
\end{figure}

\subsection{Datasets}
\paragraph{MedLFQA.}
For the medical question-answering task, \cite{cherian2024largelanguagemodelvalidity} create an experimental dataset using prompts from the MedLFQA benchmark~\citep{jeong2024olaphimprovingfactualitybiomedical}. To generate the data, they first prompt GPT-3.5-Turbo to produce new responses, which are then parsed into atomic claims by GPT-4o. For the crucial step of ground-truth annotation, they employ an automated verification procedure. For each generated claim, they prompt GPT-3.5-Turbo to verify whether it is substantiated by the reference answer provided in the original MedLFQA benchmark, effectively treating the reference answer as the ground-truth source text. From this dataset, we randomly extracted 2,000 samples, comprising 33,833 claims, for our experiments.

\paragraph{WikiBio.}
\cite{cherian2024largelanguagemodelvalidity} follow the principles of the FACTSCORE~\citep{min2023factscorefinegrainedatomicevaluation} dataset to construct a new, large-scale benchmark for evaluating the factuality of LLM output. To generate the data, they prompt GPT-3.5-Turbo to write short biographies for 8,516 names sampled from Wikipedia. To circumvent the high cost of manual annotation, they then employ a variant of the FACTSCORE procedure for fact-checking. For each generated claim, they use the BM25 algorithm~\citep{10.1561/1500000019} to retrieve ground-truth passages from Wikipedia and subsequently prompt GPT-3.5-Turbo to verify whether the claim is supported by the retrieved text. This completed dataset is referred to as WikiBio in our paper for convenience. We randomly extracted 2,000 samples, comprising 53,804 claims, for our experiments.

\paragraph{ExpertQA.} \cite{malaviya2024expertqaexpertcuratedquestionsattributed} construct the ExpertQA dataset, a large-scale benchmark for evaluating the factuality and attribution of LLM output. To generate the data, they first asked 484 qualified experts across 32 fields to formulate challenging, information-seeking questions from their professional lives. To ensure high-quality annotations, they then employed an expert-in-the-loop evaluation procedure where the same experts validated sentence-level claims in responses generated by six representative LLMs. Using the rich, human-annotated information provided in this dataset, we construct a binary ground truth for each claim. Among the datasets in our study, ExpertQA was the most challenging and also the most rigorously labeled, due to its direct validation by domain experts. We randomly extracted 2,000 samples, comprising 11,538 claims, for our experiments.

Of the 2,000 samples, 1,500 were used for the calibration phase, and 500 were used for the filtering (test) phase. Figure~\ref{fig:real_case} shows an actual format of the data sample we use.

\subsection{Grouping Criteria}
We employ complex grouping criteria that are likely to occur in reality, yet simultaneously require prompt and response parsing along with numeric values. We create a grouping criterion applicable to all three datasets and three classification criterion reflecting the characteristics of each dataset. The criteria are as follows:

\paragraph{Common: False-Claim Risk.}
This is a composite risk index calculated by analyzing features of the prompt and response texts. The risk score increases with longer response lengths, a higher frequency of lists or numbers, and the inclusion of absolute or definitive expressions like 'always', 'never', or 'cure'. Conversely, the risk score decreases when expressions citing sources or evidence, such as 'according to' or 'research shows', are present. This index estimates the potential risk of containing false information based solely on textual characteristics.

\paragraph{MedLFQA: Medical Content.}
Medical-related questions are classified into three groups based on the 'intent' of the user's prompt: \\
Information-Seeking (Info): Cases that ask for factual information about a specific disease or drug, using keywords like "what is," "symptom," or "treatment."\\
Interpretation-Seeking (Interpret): Cases that request an interpretation of what a specific symptom or condition means, using phrases like "what does it mean" or "should I worry." \\
Action-Seeking (Action): Cases that ask for specific guidance on actions or treatment, using phrases like "should I," "can I take," or "how to."

\paragraph{WikiBio: View Count.}
Groups are divided based on the cumulative number of page views for each person's Wikipedia page in the WikiBio dataset. This is used as an indicator of public interest in or awareness of the person.

\paragraph{ExpertQA: Question Domain.}
Questions (prompts) from the ExpertQA dataset are classified into three high-level domains based on the academic field specified in the official metadata: \\
Biology/Medicine (Bio/Med): Life science and health-related fields such as Healthcare, Medicine, Biology, Chemistry, and Psychology. \\
Technology/Science (Tech/Sci): Engineering and physics-related fields such as Engineering and Technology, Physics, and Astronomy. \\
Common: All other academic fields that do not fall into the two categories above.

\subsection{{Fact-checking prompt}}
{The prompt below is an example of the prompt we input to $M$ LLMs to obtain a verbalized factuality-score for each claim.}

\begin{promptbox}{{Fact-checking prompt}}
\textbf{Instruction.}
You are an expert fact-checker and logician with access to a vast knowledge base.
Your task is to assess the factual accuracy of a specific claim extracted from a model's response to a given prompt.
Follow these steps to produce a precise evaluation:

\begin{enumerate}[leftmargin=*, itemsep=1pt, topsep=2pt]
  \item \textbf{Contextual Analysis}: Understand the claim within the context of the original prompt.
  \item \textbf{Knowledge Verification}: Using your general world knowledge and the information provided, check the claim against established facts, scientific or medical consensus when applicable, and basic logical consistency.
  \item \textbf{Probability Estimation}: Estimate the probability that the claim is factually true.
\end{enumerate}

\textbf{Scoring rubric (probability of truthfulness).}
Assign a continuous score between $[0.0, 1.0]$ representing your confidence:

\begin{itemize}[leftmargin=*, itemsep=1pt, topsep=2pt]
  \item \textbf{1.0 (Certain Truth)}: The claim is axiomatically true or verified by very strong evidence.
  \item \textbf{0.8--0.9 (High Confidence)}: The claim is widely accepted as true by experts; minor context might be missing but the core is correct.
  \item \textbf{0.4--0.6 (Ambiguous/Uncertain)}: The claim is debated, only partially supported, or there is insufficient information to verify it.
  \item \textbf{0.1--0.3 (Low Confidence)}: The claim conflicts with available evidence or contains significant inaccuracies.
  \item \textbf{0.0 (False)}: The claim is demonstrably false, logically impossible, or entirely hallucinated.
\end{itemize}

\textbf{Input.}

\textbf{Original prompt:} \verb|{prompt_text}|\\
\textbf{Claim to evaluate:} \verb|{claim_text}|

\textbf{Output format.}
Return a single JSON object with an array \verb|"evaluations"|:

\begin{verbatim}
{
  "evaluations": [
    {
      "claim_id": 1,
      "reasoning": "Step-by-step reasoning here...",
      "score": 0.85
    }
  ]
}
\end{verbatim}

\end{promptbox}

\subsection{Selecting LLMs}
\label{app:LLM_select}
Although our methodology is model-free and works with any large language model, we choose to use Llama-3.3-70B-Instruct~\citep{grattafiori2024llama3herdmodels}, Qwen-2.5-72B-Instruct~\citep{qwen2025qwen25technicalreport}, and DeepSeek-V3~\citep{deepseekai2025deepseekv3technicalreport}. We selected these three white-box models for their high transparency and reproducibility. Their public availability and stable serving allow us to openly share and control all settings, such as decoding parameters, logs, and the calibration pipeline, making our work easily replicable. This open nature also reduces our dependence on unseen changes to policies or filters that come with version updates, which is a key advantage in fields where reproducibility and auditing are crucial.

\subsection{Sampling-based Methods}
\label{App:sampling-based}
\paragraph{SelfCheck.} \cite{manakul2023selfcheckgptzeroresourceblackboxhallucination} proposes a black-box, zero-resource method that detects hallucinations by sampling multiple responses from a large language model for the same query and quantifying the content consistency between the original response and the samples. Specifically, the method generates multiple stochastic responses for a single prompt and calculates response reliability by aggregating mutual consistency at the sentence/passage level, using metrics such as semantic similarity, NLI-based contradiction signals, and question-answering agreement. In our experiments, we use Llama-3.3-70B-Instruct as the model to generate multiple samples for the same prompt when applying the SelfCheck procedure.

\paragraph{FactSelfCheck (FSC).} \cite{sawczyn2025factselfcheckfactlevelblackboxhallucination} proposes a method for detecting fact-level hallucinations by extracting factual units from a response and multiple samples to construct a fact graph, then aggregating supporting and contradictory signals for each fact across all samples. The procedure involves extracting facts (e.g., entity-relation-entity triplets) from an initial response and multiple samples, calculating the degree of consensus among these facts to aggregate them into fact, sentence, or passage-level scores, and finally performing threshold-based filtering. In our experiments, we also use Llama-3.3-70B-Instruct to generate the multiple samples required for the FSC procedure.

\subsection{Evaluation Metrics}
\label{App:metrics}

To evaluate our proposed method, we assess two key aspects: the quality of our oracle-approximating factuality-score function and the performance of the final filtering procedure.

\paragraph{Coverage.}
Coverage is the primary metric for verifying the theoretical guarantee of our conformal inference procedure. A sample $D_i$ is considered "covered" if its filtered set $F(C_i)$ contains no hallucinatory claims. The empirical coverage is the fraction of samples in the test set that are successfully covered. For a given error rate $\alpha$, a valid conformal procedure is expected to yield an empirical coverage rate approaching or exceeding $1-\alpha$.
\begin{align*}
\textbf{Cov.} = \frac{1}{|\mathcal{D}_{\text{test}}|} \sum_{i \in \mathcal{D}_{\text{test}}} \mathds{1}\big[ j\in[N_i]:c_{i,j} \in F(C_i), y_{i,j}=1 \big].   
\end{align*}

\paragraph{Retention Ratio.}
While coverage measures the safety of the filter, retention ratios measure its utility. Retention measures the average fraction of total claims remaining after filtering, indicating how much of the original text volume is preserved:
\begin{align*}
\textbf{Ret.} = \frac{1}{|\mathcal{D}_{\text{test}}|} \sum_{i \in \mathcal{D}_{\text{test}}} \frac{|F(C_i)|}{|C_i|}.    
\end{align*}
\section{Additional Results}
\label{app:results}

{\textbf{Comparison with BCI, CCI, MACI, with Unified Factuality-Score.} Theorem~\ref{thm:retention_mse} demonstrates that the quality of the factuality-score directly impacts MACI's retention ratio. Therefore, improving the quality of the factuality-score through a Multi-LLM-based weighted optimization ensemble directly contributes to the high retention ratio compared to MACI's conformal inference-based baseline. Consequently, it is meaningful to verify how much the adaptive conformity score structure, which mimics oracle factuality by excluding the Multi-LLM ensemble part in MACI, contributes to the retention ratio. Therefore, we fix MACI's factuality-score to a frequency score, which BCI and CCI use. That is, we compare the coverage and retention of MACI with BCI and CCI, excluding the Multi-LLM ensemble component. Table~\ref{tab:unified_factuality_score} shows that MACI achieves the highest retention even under the same unified factuality-score. This demonstrates that MACI's oracle-motivated adaptive conformal inference structure itself is superior to the baselines.} 

\begin{table*}[h]
\centering
\scriptsize
\setlength{\tabcolsep}{0pt} 
\caption{{Comparison with conformal baselines, with unified factuality-score. Coverage within $1-\alpha \pm 0.01$ are marked with a green dot~$\color{green!50!black}{{\scriptscriptstyle \bullet}}$, while values that fall outside this range are marked with a red arrow~$\color{red}{\downarrow}$$\color{red}{\uparrow}$. }}
\label{tab:unified_factuality_score}

\sisetup{
  table-format = 1.2,
  mode = text,
  table-fixed-width = true,
  table-align-text-post = false,
  table-column-width = 0.65cm 
}

\begin{tabular*}{\textwidth}{
  l 
  @{\extracolsep{\fill}} 
  *{18}{S}
}
\toprule
& \multicolumn{6}{c}{\textbf{Target Coverage: 80\% ($\alpha=0.2$)}} 
& \multicolumn{6}{c}{\textbf{Target Coverage: 90\% ($\alpha=0.1$)}} 
& \multicolumn{6}{c}{\textbf{Target Coverage: 95\% ($\alpha=0.05$)}} \\
\cmidrule(lr){2-7} \cmidrule(lr){8-13} \cmidrule(lr){14-19}

& \multicolumn{2}{c}{BCI} & \multicolumn{2}{c}{CCI} & \multicolumn{2}{c}{MACI}
& \multicolumn{2}{c}{BCI} & \multicolumn{2}{c}{CCI} & \multicolumn{2}{c}{MACI}
& \multicolumn{2}{c}{BCI} & \multicolumn{2}{c}{CCI} & \multicolumn{2}{c}{MACI} \\
\cmidrule(lr){2-3} \cmidrule(lr){4-5} \cmidrule(lr){6-7}
\cmidrule(lr){8-9} \cmidrule(lr){10-11} \cmidrule(lr){12-13}
\cmidrule(lr){14-15} \cmidrule(lr){16-17} \cmidrule(lr){18-19}

\textbf{Group} 
& {\textbf{Cov.}} & {\textbf{Ret.}} & {\textbf{Cov.}} & {\textbf{Ret.}} & {\textbf{Cov.}} & {\textbf{Ret.}}
& {\textbf{Cov.}} & {\textbf{Ret.}} & {\textbf{Cov.}} & {\textbf{Ret.}} & {\textbf{Cov.}} & {\textbf{Ret.}}
& {\textbf{Cov.}} & {\textbf{Ret.}} & {\textbf{Cov.}} & {\textbf{Ret.}} & {\textbf{Cov.}} & {\textbf{Ret.}} \\

\toprule

\multicolumn{19}{l}{\textbf{MedLFQA: False-Claim Risk}} \\
Low      
& 0.84 $\color{red}{\uparrow}$ & 0.07 & 0.83 $\color{red}{\uparrow}$ & 0.68 & 0.82 $\color{red}{\uparrow}$ & \textbf{0.70} 
& 0.94 $\color{red}{\uparrow}$ & 0.03 & 0.91 $\color{green!50!black}{{\scriptscriptstyle \bullet}}$ & 0.41 & 0.91 $\color{green!50!black}{{\scriptscriptstyle \bullet}}$ & \textbf{0.49} 
& 0.97 $\color{red}{\uparrow}$ & 0.01 & 0.95 $\color{green!50!black}{{\scriptscriptstyle \bullet}}$ & 0.28 & 0.96 $\color{green!50!black}{{\scriptscriptstyle \bullet}}$ & \textbf{0.32} \\ 

Medium   
& 0.83 $\color{red}{\uparrow}$ & 0.06 & 0.81 $\color{green!50!black}{{\scriptscriptstyle \bullet}}$ & 0.66 & 0.81  $\color{green!50!black}{{\scriptscriptstyle \bullet}}$ & \textbf{0.71} 
& 0.89 $\color{green!50!black}{{\scriptscriptstyle \bullet}}$ & 0.03 & 0.90 $\color{green!50!black}{{\scriptscriptstyle \bullet}}$ & 0.39 & 0.91 $\color{green!50!black}{{\scriptscriptstyle \bullet}}$ & \textbf{0.46} 
& 0.94 $\color{green!50!black}{{\scriptscriptstyle \bullet}}$ & 0.01 & 0.95 $\color{green!50!black}{{\scriptscriptstyle \bullet}}$ & 0.25 & 0.96 $\color{green!50!black}{{\scriptscriptstyle \bullet}}$ & \textbf{0.30} \\ 

High     
& 0.73 $\color{red}{\downarrow}$ & 0.06 & 0.78 $\color{red}{\downarrow}$ & 0.43 & 0.81 $\color{green!50!black}{{\scriptscriptstyle \bullet}}$ & \textbf{0.58} 
& 0.88 $\color{red}{\downarrow}$ & 0.01 & 0.89 $\color{green!50!black}{{\scriptscriptstyle \bullet}}$ & 0.22 & 0.90 $\color{green!50!black}{{\scriptscriptstyle \bullet}}$ & \textbf{0.36} 
& 0.94 $\color{green!50!black}{{\scriptscriptstyle \bullet}}$ & 0.01 & 0.94 $\color{green!50!black}{{\scriptscriptstyle \bullet}}$ & 0.12 & 0.95 $\color{green!50!black}{{\scriptscriptstyle \bullet}}$ & \textbf{0.24} \\

\bottomrule
\end{tabular*}
\end{table*}

\textbf{Comparison with MultiValid Conformal Inference.} Research in the Multivalid Conformal Prediction family presents a robust framework that simultaneously guarantees coverage for multiple subgroups and threshold intervals. Specifically, \cite{jung2022batchmultivalidconformalprediction}'s Batch Multivalid Conformal Prediction defines the concept of multivalid coverage, which simultaneously satisfies group-conditional coverage for multiple groups and threshold-level-conditional coverage for various threshold levels under a single threshold function.
Formally, letting $s(X,Y)$ denote a conformity score, $\Gamma_\tau(X)$ the prediction set induced by a threshold function $\tau(\cdot)$, and $\mathcal{G}$, $\mathcal{I}$ denote a family of groups and score intervals, multivalid coverage requires that for all $G \in \mathcal{G}$ and $I \in \mathcal{I}$,
\begin{align*}
\mathbb{P}\bigl( Y \in \Gamma_\tau(X) \,\big|\, X \in G,\ s(X,Y) \in I \bigr) \;\ge\; 1-\alpha - \varepsilon,    
\end{align*}
for a small tolerance $\varepsilon \ge 0$. This is a stronger requirement than standard group-conditional coverage, which only conditions on group membership.

While this approach provides a strong form of validity, it can lead to a conservatively large prediction set due to the need to satisfy many groups and threshold levels simultaneously. MACI targets a more specific application setting. MACI's focus is on performing false-claim filtering for responses generated by LLMs, aiming to maintain group-conditional coverage for a small number of meaningful groups (e.g., groups based on false-claim risk) while achieving a level of retention suitable for practical systems. Concretely, if $\mathcal{G}_{\mathrm{risk}}$ denotes a task-specific partition (e.g., low/medium/high false-claim risk), MACI enforces
\begin{align*}
\mathbb{P}\bigl( Y \in C(X) \,\big|\, Z \in g \bigr) \;\ge\; 1-\alpha,
\qquad \forall g \in \mathcal{G}_{\mathrm{risk}},    
\end{align*}
without additionally conditioning on score intervals. Therefore, MACI focuses on balancing coverage and efficiency for the specific task of false-claim filtering, rather than providing general multivalid guarantees across diverse groups and threshold intervals.

To quantitatively assess this difference, we implemented MultiValid Conformal Inference (MVCI), a modification of \cite{jung2022batchmultivalidconformalprediction}'s BatchMVP tailored for the false-claim filtering environment, and compared it with MACI. Experiments were conducted on the WikiBio dataset, with both methods configured to use the same group definitions (e.g., group partitioning based on false-claim risk), the same calibration/test split, and the same target error rate $\alpha$. \\ Table~\ref{tab:MVCI-MACI} shows the results comparing the group-conditional coverage and retention of the two methods. MVCI satisfies the target coverage level in each group but achieves retention at a significantly lower rate by producing conservative thresholds to simultaneously guarantee coverage across groups and threshold levels. In contrast, MACI achieves a much higher retention ratio than MVCI while maintaining a similar level of group-conditional coverage under the same group definitions and significance level. This reflects the different design philosophies: MVCI prioritizes strong multivalid coverage, whereas MACI emphasizes balancing group-conditional guarantees with practical retention in false-claim filtering.

\begin{table*}[h]
\centering
\scriptsize
\setlength{\tabcolsep}{3pt}
\caption{{Comparison with MultiValid Conformal Inference (MVCI). Coverage within $1-\alpha \pm 0.01$ are marked with a green dot~$\color{green!50!black}{{\scriptscriptstyle \bullet}}$, while values that fall outside this range are marked with a red arrow~$\color{red}{\downarrow}$$\color{red}{\uparrow}$.}}
\label{tab:MVCI-MACI}

\sisetup{
  table-format = 1.2,
  mode = text,
  table-align-text-post = false, 
  table-fixed-width = true,
  table-column-width = 0.7cm 
}
\begin{tabular*}{\textwidth}{
  l @{\extracolsep{\fill}}
   S S | S S |
   S S | S S |
   S S | S S 
}
\toprule
& \multicolumn{4}{c}{\textbf{Target Coverage: 80\% ($\alpha=0.2$)}} 
& \multicolumn{4}{c}{\textbf{Target Coverage: 90\% ($\alpha=0.1$)}} 
& \multicolumn{4}{c}{\textbf{Target Coverage: 95\% ($\alpha=0.05$)}} \\
\cmidrule(lr){2-5} \cmidrule(lr){6-9} \cmidrule(lr){10-13}

& \multicolumn{2}{c}{MVCI} & \multicolumn{2}{c}{MACI} 
& \multicolumn{2}{c}{MVCI} & \multicolumn{2}{c}{MACI} 
& \multicolumn{2}{c}{MVCI} & \multicolumn{2}{c}{MACI} \\
\cmidrule(lr){2-3} \cmidrule(lr){4-5} 
\cmidrule(lr){6-7} \cmidrule(lr){8-9} 
\cmidrule(lr){10-11} \cmidrule(lr){12-13}

\textbf{Group} 
& {\textbf{Cov.}} & {\textbf{Ret.}} & {\textbf{Cov.}} & {\textbf{Ret.}} 
& {\textbf{Cov.}} & {\textbf{Ret.}} & {\textbf{Cov.}} & {\textbf{Ret.}} 
& {\textbf{Cov.}} & {\textbf{Ret.}} & {\textbf{Cov.}} & {\textbf{Ret.}} \\

\toprule

\multicolumn{13}{l}{\textbf{WikiBio: False-Claim Risk}} \\
Low      & 0.80 $\color{green!50!black}{{\scriptscriptstyle \bullet}}$ & 0.11 & 0.82 $\color{red}{\uparrow}$ & \textbf{0.40} & 0.89 $\color{green!50!black}{{\scriptscriptstyle \bullet}}$ & 0.03 & 0.90 $\color{green!50!black}{{\scriptscriptstyle \bullet}}$ & \textbf{0.23} & 0.95 $\color{green!50!black}{{\scriptscriptstyle \bullet}}$ & 0.02 & 0.94 $\color{green!50!black}{{\scriptscriptstyle \bullet}}$ & \textbf{0.17} \\
Medium   & 0.81 $\color{green!50!black}{{\scriptscriptstyle \bullet}}$ & 0.08 & 0.81 $\color{green!50!black}{{\scriptscriptstyle \bullet}}$ & \textbf{0.42} & 0.91 $\color{green!50!black}{{\scriptscriptstyle \bullet}}$ & 0.02 & 0.90 $\color{green!50!black}{{\scriptscriptstyle \bullet}}$ & \textbf{0.25} & 0.96 $\color{green!50!black}{{\scriptscriptstyle \bullet}}$ & 0.01 & 0.95 $\color{green!50!black}{{\scriptscriptstyle \bullet}}$ & \textbf{0.12} \\
High     & 0.81 $\color{green!50!black}{{\scriptscriptstyle \bullet}}$ & 0.03 & 0.81 $\color{green!50!black}{{\scriptscriptstyle \bullet}}$ & \textbf{0.45} & 0.90 $\color{green!50!black}{{\scriptscriptstyle \bullet}}$ & 0.01 & 0.90 $\color{green!50!black}{{\scriptscriptstyle \bullet}}$ & \textbf{0.28} & 0.95 $\color{green!50!black}{{\scriptscriptstyle \bullet}}$ & 0.01 & 0.96 $\color{green!50!black}{{\scriptscriptstyle \bullet}}$ & \textbf{0.09} \\

\bottomrule
\end{tabular*}
\end{table*}

\paragraph{Comparison with sampling-based methods.} While our primary focus is on CI-based baselines, it is also practical to compare against recent non-CI approaches. We compare MACI’s group-conditional coverage, marginal coverage, and retention ratio with sampling-based methods that apply to black-box LLMs and do not rely on retrieval. Brief descriptions of these baselines appear in Section \ref{App:sampling-based}. Unlike CI-based methods, sampling-based approaches do not provide statistical guarantees; instead, they compute a factuality-score $p \in [0,1]$, enabling false-claim filtering via a threshold (e.g., 0.5). Table \ref{tab:sampling-based} shows that sampling-based methods attain high retention but low coverage. This highlights their limitation in meeting the strict requirement that the filtered set contain no false-claims. Moreover, their target coverage is not user-specified and thus unpredictable. These points underscore the need for MACI in high-stakes settings that require a user-specified high $1 - \alpha$.

\begin{table*}[h]
\centering
\caption{{Comparison with sampling-based methods, a representative black-box and non-retrieval approach for false-claim filtering. Sampling-based methods generally exhibit very low or unstable coverage and a high retention ratio. This suggests they are unsuitable for the strict target that all claims have to be factual (the definition of \textbf{Cov.}). In contrast, MACI ($\alpha$ = 0.1) demonstrates the ability to reliably guarantee the user's desired coverage.}}
\label{tab:sampling-based}
\scriptsize
\setlength{\tabcolsep}{3pt}
\sisetup{
  table-format = 1.2,
  mode = text
}

\begin{minipage}[t]{0.49\textwidth}
\centering
\begin{tabular*}{\columnwidth}{
  l @{\extracolsep{\fill}}
  S S | S S | S S | S S
}
\toprule
& \multicolumn{2}{c}{SelfCheck} & \multicolumn{2}{c}{FSC-text} & \multicolumn{2}{c}{FSC-KG} & \multicolumn{2}{c}{MACI} \\
\cmidrule(lr){2-3} \cmidrule(lr){4-5} \cmidrule(lr){6-7} \cmidrule(lr){8-9}

\textbf{Group}
& {\textbf{Cov.}} & {\textbf{Ret.}} & {\textbf{Cov.}} & {\textbf{Ret.}} & {\textbf{Cov.}} & {\textbf{Ret.}} & {\textbf{Cov.}} & {\textbf{Ret.}} \\

\toprule
\textbf{MedLFQA} & {0.56} & {0.97} & {0.63} & {0.85} & {0.64} & {0.88} & {\textbf{0.90}} & {0.50} \\
\midrule

\multicolumn{9}{l}{\textbf{Medical Content}} \\
Info     & 0.49 & 0.98 & 0.59 & 0.85 & 0.58 & 0.87 & \textbf{0.90} & 0.48 \\
Interpret& 0.54 & 0.98 & 0.59 & 0.90 & 0.63 & 0.93 & \textbf{0.90} & 0.47 \\
Action   & 0.64 & 0.95 & 0.74 & 0.79 & 0.73 & 0.83 & \textbf{0.90} & 0.53 \\
\midrule

\multicolumn{9}{l}{\textbf{False-Claim Risk}} \\
Low      & 0.64 & 0.98 & 0.70 & 0.86 & 0.71 & 0.89 & \textbf{0.89} & 0.52 \\
Medium   & 0.56 & 0.98 & 0.63 & 0.88 & 0.60 & 0.92 & \textbf{0.91} & 0.46 \\
High     & 0.41 & 0.97 & 0.55 & 0.82 & 0.58 & 0.85 & \textbf{0.89} & 0.41 \\
\bottomrule
\end{tabular*}
\end{minipage}
\hfill\vrule width 1pt\hfill 
\begin{minipage}[t]{0.49\textwidth}
\centering
\begin{tabular*}{\columnwidth}{
  l @{\extracolsep{\fill}}
  S S | S S | S S | S S
}
\toprule
& \multicolumn{2}{c}{SelfCheck} & \multicolumn{2}{c}{FSC-text} & \multicolumn{2}{c}{FSC-KG} & \multicolumn{2}{c}{MACI} \\
\cmidrule(lr){2-3} \cmidrule(lr){4-5} \cmidrule(lr){6-7} \cmidrule(lr){8-9}

\textbf{Group}
& {\textbf{Cov.}} & {\textbf{Ret.}} & {\textbf{Cov.}} & {\textbf{Ret.}} & {\textbf{Cov.}} & {\textbf{Ret.}} & {\textbf{Cov.}} & {\textbf{Ret.}} \\

\toprule
\textbf{WikiBio} & {0.12} & {0.97} & {0.33} & {0.77} & {0.37} & {0.70} & {\textbf{0.90}} & {0.25} \\
\midrule

\multicolumn{9}{l}{\textbf{View Count}} \\
Low    & 0.11 & 0.95 & 0.42 & 0.69 & 0.46 & 0.64 & \textbf{0.91} & 0.21 \\
Medium & 0.13 & 0.97 & 0.31 & 0.79 & 0.27 & 0.73 & \textbf{0.91} & 0.24 \\
High   & 0.11 & 0.98 & 0.26 & 0.82 & 0.39 & 0.73 & \textbf{0.91} & 0.24 \\
\midrule

\multicolumn{9}{l}{\textbf{False-Claim Risk}} \\
Low    & 0.19 & 0.98 & 0.41 & 0.78 & 0.43 & 0.72 & \textbf{0.90} & 0.23 \\
Medium & 0.08 & 0.96 & 0.28 & 0.74 & 0.32 & 0.68 & \textbf{0.90} & 0.25 \\
High   & 0.08 & 0.97 & 0.30 & 0.78 & 0.35 & 0.70 & \textbf{0.90} & 0.28 \\
\bottomrule
\end{tabular*}
\end{minipage}
\end{table*}

\begin{figure}[]
    \centering
    \includegraphics[width=\linewidth]{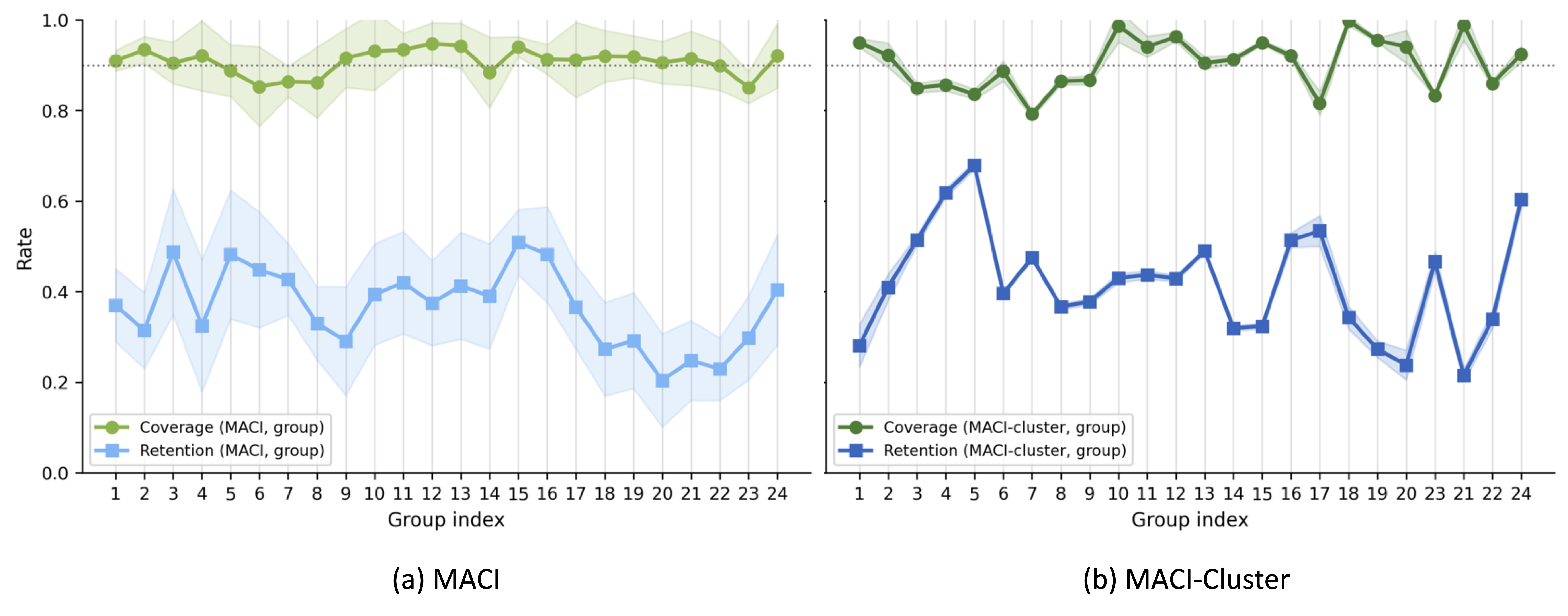}
    \caption{{Coverage and retention in large number of groups, using MACI and the MACI-cluster method. The top shows results split into 24 groups. The average coverage per group is more maintained by MACI than MACI-cluster, but with greater variance. Using the group clustering method allows for practical results by sacrificing strict group-conditional coverage.}}
    \label{fig:group-clustering}
\end{figure}

\paragraph{Applying Group Clustering Method.} MACI adopts a Mondrian-style group-conditional conformal scheme: for each pre-defined group $g \in \mathcal{G}$, we calibrate a separate threshold from the calibration examples belonging to that group. Let $R_g$ denote the nonconformity score distribution within group $g$, and let $\hat{q}_g^{1-\alpha}$ be the empirical $(1-\alpha)$-quantile of the calibration scores in $g$. Using this group-specific quantile as the threshold yields an exact finite-sample group-conditional guarantee
\begin{align*}
\mathbb{P}( Y \in C_g(X) \,\big|\, Z = g ) \;\ge\; 1-\alpha,
\qquad \forall g \in \mathcal{G},    
\end{align*}
under group-conditional exchangeability between calibration and test examples. However, when the calibration sample size $n_g$ of a particular group is small, the empirical quantile $\hat{q}_g^{1-\alpha}$ can have high variance, which in turn leads to unstable thresholds and increased variability in both coverage and retention across groups.

\cite{pmlr-v267-gao25c} propose to mitigate this issue by clustering groups whose score distributions are similar and then performing conformal calibration at the cluster level. Formally, let $h : \mathcal{G} \to \mathcal{K}$ map each group $g$ to a cluster $k = h(g)$, and let $R_k$ denote the score distribution obtained by pooling all calibration scores from the groups assigned to cluster $k$. Mondrian conformal inference is then applied in the cluster space, producing a threshold $r_k^{\alpha}$ that is shared by all groups $g$ with $h(g) = k$. This increases the effective calibration sample size for each threshold and reduces variance. At the same time, the group-conditional guarantee is relaxed: if the score distribution of each group $g$ is close to that of its cluster $k = h(g)$ in the sense that
\[
\sup_{t \in \mathbb{R}} \bigl| F_g(t) - F_k(t) \bigr| \;\le\; \varepsilon_g,
\]
where $F_g$ and $F_k$ are the CDFs of $R_g$ and $R_k$, then the group-conditional coverage degrades from the exact $1-\alpha$ level to
\[
\mathbb{P}\bigl\{ Y \in C_k(X) \,\big|\, Z = g \bigr\}
\;\ge\; 1-\alpha - \varepsilon_g,
\]
for all $g \in \mathcal{G}$, as shown in \cite{pmlr-v267-gao25c}. In other words, group clustering trades strict finite-sample group-conditional coverage for lower variance and more stable thresholds.

We construct a variant, MACI-Cluster, by combining Gao et al.'s group clustering mechanism with MACI's factuality-score and thresholding scheme. We first define fine-grained groups using the false-claim risk score, apply group clustering in the space of conformity score distributions, and then learn cluster-level thresholds that are shared by the groups within each cluster. We then compare MACI-Cluster with the original MACI in terms of empirical group-conditional coverage and retention across many groups. Concretely, MACI-Cluster uses $K=8$ clusters obtained by $k$-means on vectorized conformity-score histograms, and applies a single conformal threshold per cluster.

Figure~\ref{fig:group-clustering} shows the coverage and retention per group for MACI and MACI-Cluster under 24 group partitions. Over 100 repeated runs, MACI’s group-wise coverage is on average close to the user-specified level but exhibits large variance, making the realized coverage unstable. MACI-Cluster yields group-wise coverage that does not exactly converge to the target level due to distributional differences between clusters and evaluation groups, but its variance is smaller, leading to substantially more stable coverage and retention when the number of groups is large and each group contains few samples. Consequently, the group clustering method is a good approach that provides stable results when the number of groups increases and the sample size is small.

\begin{figure}
    \centering
    \includegraphics[width=0.6\columnwidth]{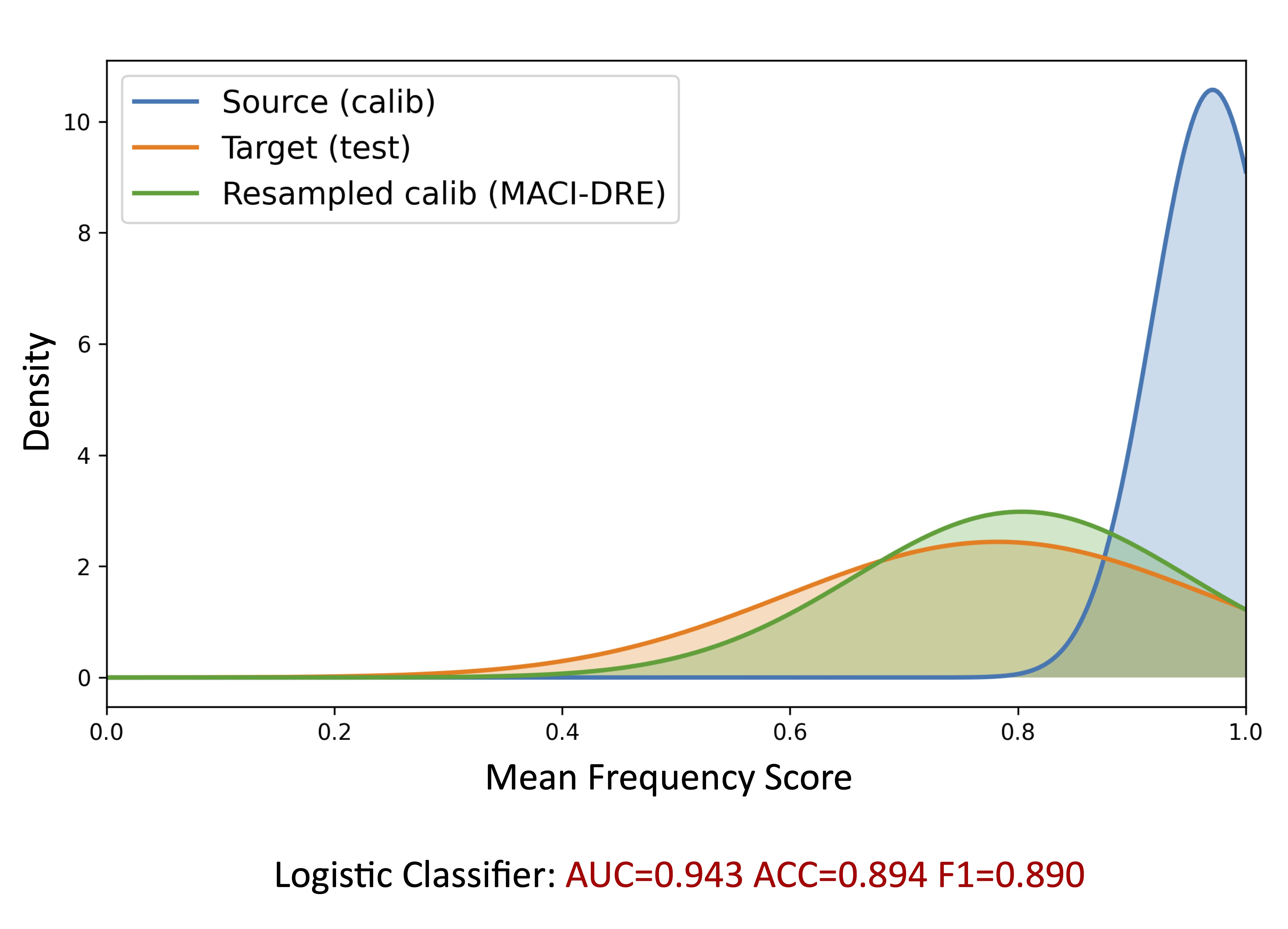}
    \caption{{Covariate shift and resampling distribution in the MedLFQA dataset. The blue and red graphs show the distributions of calibration and test, respectively, based on the covariate shift variable mean frequency score. The green graph shows the distribution resampled using a binary classifier.}}
    \label{fig:MACI-DRE}
\end{figure}

\paragraph{MACI under Covariate Shift.} The calibration data used to fit conformal thresholds and the queries encountered at test time do not necessarily follow the same covariate distribution in deployed systems. This setting is typically formalized as covariate shift: 
calibration data $\{(X_i, Y_i)\}_{i=1}^{n} \stackrel{i.i.d.}{\sim} \mathbf{P}_{XY}^{(s)}$ 
and the test point $(X_{n+1},Y_{n+1}) \sim \mathbf{P}_{XY}^{(t)}$ satisfy
$\mathbf{P}^{(s)}_{Y \sgiven X} = \mathbf{P}^{(t)}_{Y \sgiven X}$,
while their marginal covariate distributions $\mathbf{P}^{(s)}_X$ and $\mathbf{P}^{(t)}_X$ may differ.
Let $p^{(s)}_X$ and $p^{(t)}_X$ denote the densities of $\mathbf{P}^{(s)}_X$ and $\mathbf{P}^{(t)}_X$, respectively 
(with respect to a common dominating measure). 
\citet{tibshirani2020conformalpredictioncovariateshift} address this mismatch by 
reweighting calibration examples using the density ratio
\begin{align*}
r(x) \;=\; \frac{p^{(t)}_X(x)}{p^{(s)}_X(x)}.    
\end{align*}
Let $S_i = S(X_i, Y_i)$ denote nonconformity scores on the source calibration set. A weighted empirical quantile for a target miscoverage level $\alpha$ is then defined by
\begin{align*}
\widehat{q}_\alpha
\;=\;
\inf\Bigl\{q \;:\;
\frac{\sum_{i=1}^{n} r(X_i)\,\mathds{1}\{S_i > q\}}{\sum_{i=1}^{n} r(X_i)} \;\le\; \alpha
\Bigr\},    
\end{align*}
which replaces the usual unweighted empirical tail probability by a density-ratio weighted version that targets $\mathbf{P}_t$ instead of $\mathbf{P}_s$.

We adopt this density-ratio correction principle for the MACI pipeline and refer to the resulting variant as MACI-DRE. Rather than modifying the quantile computation inside MACI, we first use an estimate $\widehat{r}(x)$ to construct a density-ratio–weighted calibration set by importance resampling, and then run the original MACI algorithm on this resampled set. Concretely, given calibration samples $\{(X_i, Y_i)\}_{i=1}^n$ from $P_s$ and estimated ratios $\widehat{r}(X_i)$, we draw indices
\begin{align*}
I_1,\dots,I_n \;\sim\; \text{Multinomial}(
n;\;
\pi_1,\dots,\pi_n
),
\qquad
\pi_i
\;=\;
\frac{\widehat{r}(X_i)}{\sum_{j=1}^{n} \widehat{r}(X_j)},    
\end{align*}
and form a resampled calibration set $\{(X_{I_k}, Y_{I_k})\}_{k=1}^n$. For any bounded measurable function $f$,
\begin{align*}
\mathbb{E}\biggl[\frac{1}{n} \sum_{k=1}^{n} f(X_{I_k}, Y_{I_k})\biggr]
\;=\;
\sum_{i=1}^{n} \pi_i f(X_i, Y_i)
\;\approx\;
\mathbb{E}_{(X,Y)\sim P_t}\bigl[f(X,Y)\bigr]
\quad\text{if}\quad
\widehat{r}(x) \approx \frac{p_t(x)}{p_s(x)}.    
\end{align*}
Thus, when the density ratio is well estimated, running MACI on the resampled calibration set is equivalent, in expectation, to running MACI on a calibration set drawn directly from the target distribution $P_t$, while keeping all internal components of MACI (ensemble weight learning, subgroup-wise threshold estimation) unchanged.

For our covariate-shift situation on MedLFQA, we construct an explicit covariate shift between the calibration and test distributions. For each data sample $\{(P,R,C)\}$, we obtain claim-level factuality-scores via SelfCheck~\citep{manakul2023selfcheckgptzeroresourceblackboxhallucination} method, and define a scalar feature
\[
s(x) \;=\; \frac{1}{m(x)} \sum_{j=1}^{m(x)} \hat{p}_{\text{SelfCheck}}(x, j),
\]
where $m(x)$ is the number of atomic claims in the response. We sort all samples by $s(x)$ and designate the upper region (larger $s(x)$, on average more factual responses) as the source test pool $P_s$, and the lower region (smaller $s(x)$, more hallucination-prone responses) as the target calibration pool $P_t$. This produces a clear marginal covariate shift in terms of the distribution of $s(x)$, while keeping the underlying annotation procedure fixed.

The density ratio is estimated from features that are observable at deployment time and do not require ground-truth labels. For each sample $x$, we build a feature vector
\begin{align*}
\phi(x)
\;=\;
\bigl(
\overline{s}(x),
\operatorname{std}(s(x)),
\text{len(prompt)},
\text{len(response)},
1
\bigr),   
\end{align*}
where $\overline{s}(x)$ and $\operatorname{std}(s(x))$ denote the mean and standard deviation of the claim-level SelfCheck scores, and the remaining components encode simple prompt and response length statistics. We then train a binary classifier on $\phi(x)$ to distinguish source from target samples, with label $0$ for $P_s$ and $1$ for $P_t$. Using logistic regression with approximately balanced priors, the estimated density ratio is obtained as
\begin{align*}
\widehat{r}(x)
\;=\;
\frac{\hat{p}(Y=1 \mid \phi(x))}{1 - \hat{p}(Y=1 \mid \phi(x))},  
\end{align*}

which is then used for the importance resampling step described above.

\paragraph{Comparison with Conformity Score Variants.} MACI’s multiplicative score is motivated by the oracle formulation. Under an ideal oracle that assigns a joint factuality-score to each claim, combining the scores of multiple verifiers by multiplication provides a simple approximation to this joint score. For numerical stability, we implement this in the following log-product form:
\begin{align*}
S_{\mathrm{mult}}(c) = \sum_{i=1}^{N_i} \log p_i(c).    
\end{align*}
Conformal calibration depends on the ranking of conformity scores rather than their absolute values, so the product and log-product forms are related by a monotone transformation and yield the same coverage guarantees.

We compare this multiplicative score against two alternative aggregation rules. The first is a log-sum form,
\begin{align*}
S_{\mathrm{log\text{-}sum}}(c) = \log\Bigl(\frac{1}{N_i}\sum_{i=1}^{N_i} p_i(c)\Bigr).    
\end{align*}
The second is a power-mean form,
\begin{align*}
S_{\mathrm{PM}}(c;\lambda) = \Bigl(\frac{1}{N_i} \sum_{i=1}^{N_i} p_i(c)^{\lambda}\Bigr)^{1/\lambda},    
\end{align*}
which allows us to explore different aggregation behaviors by varying the exponent $\lambda$.

Table~\ref{tab:conformity_variants} reports the coverage and retention of these two variants and MACI. Overall, all three methods stay close to the target coverage, but power-mean consistently achieves the lowest retention across settings. The log-sum form attains retention comparable to MACI in the low- and medium-risk groups, but its retention noticeably drops in the high-risk group and for smaller values of $\alpha$. MACI’s log-product score has a clear motivation from the oracle formulation, is compatible with the conformal calibration framework, and empirically performs best overall in our experiments, so we use it as the default choice in the main text. Sum-based (log-sum) aggregation remains a promising alternative that could be explored further with more refined design in future work.

\begin{table*}[h]
\centering
\scriptsize
\setlength{\tabcolsep}{0pt} 
\caption{{Comparison with Conformity Score Variants (Power-Mean ($\lambda = 2$), Log-Sum, and MACI's log-product) on MedLFQA, across false-claim risk groups.}}
\label{tab:conformity_variants}

\sisetup{
  table-format = 1.2,
  mode = text,
  table-fixed-width = true,
  table-align-text-post = false,
  table-column-width = 0.65cm 
}

\begin{tabular*}{\textwidth}{
  l 
  @{\extracolsep{\fill}} 
  *{18}{S}
}
\toprule
& \multicolumn{6}{c}{\textbf{Target Coverage: 80\% ($\alpha=0.2$)}} 
& \multicolumn{6}{c}{\textbf{Target Coverage: 90\% ($\alpha=0.1$)}} 
& \multicolumn{6}{c}{\textbf{Target Coverage: 95\% ($\alpha=0.05$)}} \\
\cmidrule(lr){2-7} \cmidrule(lr){8-13} \cmidrule(lr){14-19}

& \multicolumn{2}{c}{Power-Mean} & \multicolumn{2}{c}{Log-Sum} & \multicolumn{2}{c}{MACI}
& \multicolumn{2}{c}{Power-Mean} & \multicolumn{2}{c}{Log-Sum} & \multicolumn{2}{c}{MACI}
& \multicolumn{2}{c}{Power-Mean} & \multicolumn{2}{c}{Log-Sum} & \multicolumn{2}{c}{MACI} \\
\cmidrule(lr){2-3} \cmidrule(lr){4-5} \cmidrule(lr){6-7}
\cmidrule(lr){8-9} \cmidrule(lr){10-11} \cmidrule(lr){12-13}
\cmidrule(lr){14-15} \cmidrule(lr){16-17} \cmidrule(lr){18-19}

\textbf{Group} 
& {\textbf{Cov.}} & {\textbf{Ret.}} & {\textbf{Cov.}} & {\textbf{Ret.}} & {\textbf{Cov.}} & {\textbf{Ret.}}
& {\textbf{Cov.}} & {\textbf{Ret.}} & {\textbf{Cov.}} & {\textbf{Ret.}} & {\textbf{Cov.}} & {\textbf{Ret.}}
& {\textbf{Cov.}} & {\textbf{Ret.}} & {\textbf{Cov.}} & {\textbf{Ret.}} & {\textbf{Cov.}} & {\textbf{Ret.}} \\

\toprule

\multicolumn{19}{l}{\textbf{MedLFQA: False-Claim Risk}} \\

Low      
& 0.81 & 0.67 & 0.81 & 0.75 & 0.79 & \textbf{0.78} 
& 0.91 & 0.42 & 0.90 & \textbf{0.52} & 0.89 & \textbf{0.52} 
& 0.96 & 0.29 & 0.95 & 0.33 & 0.95 & \textbf{0.37} \\ 

Medium   
& 0.80 & 0.66 & 0.82 & \textbf{0.70} & 0.79 & \textbf{0.70} 
& 0.91 & 0.45 & 0.90 & \textbf{0.46} & 0.91 & \textbf{0.46} 
& 0.95 & \textbf{0.31} & 0.95 & \textbf{0.31} & 0.95 & \textbf{0.31} \\ 

High     
& 0.81 & 0.54 & 0.80 & 0.60 & 0.80 & \textbf{0.64} 
& 0.91 & 0.35 & 0.91 & 0.36 & 0.89 & \textbf{0.41} 
& 0.96 & 0.23 & 0.95 & 0.24 & 0.95 & \textbf{0.26} \\ 

\bottomrule
\end{tabular*}
\end{table*}

\paragraph{Modeling Joint Probability.} Our primary experimental setting decomposes each model response into independent atomic claims and then estimates a factuality-score for each claim using only the $\{\text{Prompt}, c_i\}$ pair. This design follows the oracle formulation and enables conformal analysis at the claim level, implicitly adopting an independence assumption between claims. In realistic LLM outputs, however, claims may exhibit correlations, redundancy, or logical dependencies, so this assumption does not perfectly match the underlying generation process.

In our framework, factuality checking is delegated to $M$ LLMs used as verifiers, rather than being resolved within a single decoding of the generative model. Under this perspective, an important consideration is whether a claim-wise modeling strategy remains effective compared to a joint modeling approach that explicitly exposes the full Claim Set. To investigate this, we complement the per-claim setting with a joint modeling experiment in which the verifier receives the entire Claim Set at once and outputs conditional scores $\{p(c_i \mid C)\}_{i=1}^{N_i}$. Concretely, we provide $\{\text{Prompt}, \{c_1,\dots,c_{N_i}\}\}$ as input so that the verifier can, in principle, exploit interactions among claims when assigning factuality-scores.

Table~\ref{tab:joint_prob} shows that the per-claim setting and the joint setting yield similar coverage and retention profiles. This suggests that, even when the full Claim Set is available, the verifier LLM can effectively treat each claim as a comparatively independent decision unit, and that claim-wise modeling is a practically viable choice from the verifier’s perspective. In other words, the independence assumption used for our oracle-based analysis serves both as a convenient theoretical simplification and as an approximation that does not substantially degrade performance when compared to a more joint modeling strategy.
 
\begin{table*}[h]
\centering
\scriptsize
\setlength{\tabcolsep}{3pt}
\caption{{Comparison between MACI and MACI-Joint. MACI-Joint scores claims jointly given the whole claim set, while MACI scores each claim independently.}}
\label{tab:joint_prob}

\sisetup{
  table-format = 1.2,
  mode = text,
  table-align-text-post = false, 
  table-fixed-width = true,
  table-column-width = 0.7cm 
}

\begin{tabular*}{\textwidth}{
  l @{\extracolsep{\fill}}
  S S | S S |
  S S | S S |
  S S | S S 
}
\toprule
& \multicolumn{4}{c}{\textbf{Target Coverage: 80\% ($\alpha=0.2$)}} 
& \multicolumn{4}{c}{\textbf{Target Coverage: 90\% ($\alpha=0.1$)}} 
& \multicolumn{4}{c}{\textbf{Target Coverage: 95\% ($\alpha=0.05$)}} \\
\cmidrule(lr){2-5} \cmidrule(lr){6-9} \cmidrule(lr){10-13}

& \multicolumn{2}{c}{MACI} & \multicolumn{2}{c}{MACI-Joint} 
& \multicolumn{2}{c}{MACI} & \multicolumn{2}{c}{MACI-Joint} 
& \multicolumn{2}{c}{MACI} & \multicolumn{2}{c}{MACI-Joint} \\
\cmidrule(lr){2-3} \cmidrule(lr){4-5} 
\cmidrule(lr){6-7} \cmidrule(lr){8-9} 
\cmidrule(lr){10-11} \cmidrule(lr){12-13}

\textbf{Group} 
& {\textbf{Cov.}} & {\textbf{Ret.}} & {\textbf{Cov.}} & {\textbf{Ret.}} 
& {\textbf{Cov.}} & {\textbf{Ret.}} & {\textbf{Cov.}} & {\textbf{Ret.}} 
& {\textbf{Cov.}} & {\textbf{Ret.}} & {\textbf{Cov.}} & {\textbf{Ret.}} \\

\toprule

\multicolumn{13}{l}{\textbf{MedLFQA: False-Claim Risk}} \\

Low      
& 0.79 & \textbf{0.78} & 0.80 & 0.75   
& 0.89 & \textbf{0.52} & 0.91 & \textbf{0.52}  
& 0.95 & \textbf{0.37} & 0.95 & 0.36 \\

Medium   
& 0.79 & 0.70 & 0.80 & \textbf{0.74}   
& 0.91 & 0.46 & 0.90 & \textbf{0.47}   
& 0.95 & 0.31 & 0.95 & \textbf{0.34} \\

High
& 0.80 & \textbf{0.64} & 0.80 & 0.60   
& 0.89 & \textbf{0.41} & 0.90 & 0.38   
& 0.95 & \textbf{0.26} & 0.95 & 0.25 \\

\bottomrule
\end{tabular*}
\end{table*}

\begin{figure}[t]
    \centering
    \includegraphics[width=1\columnwidth]{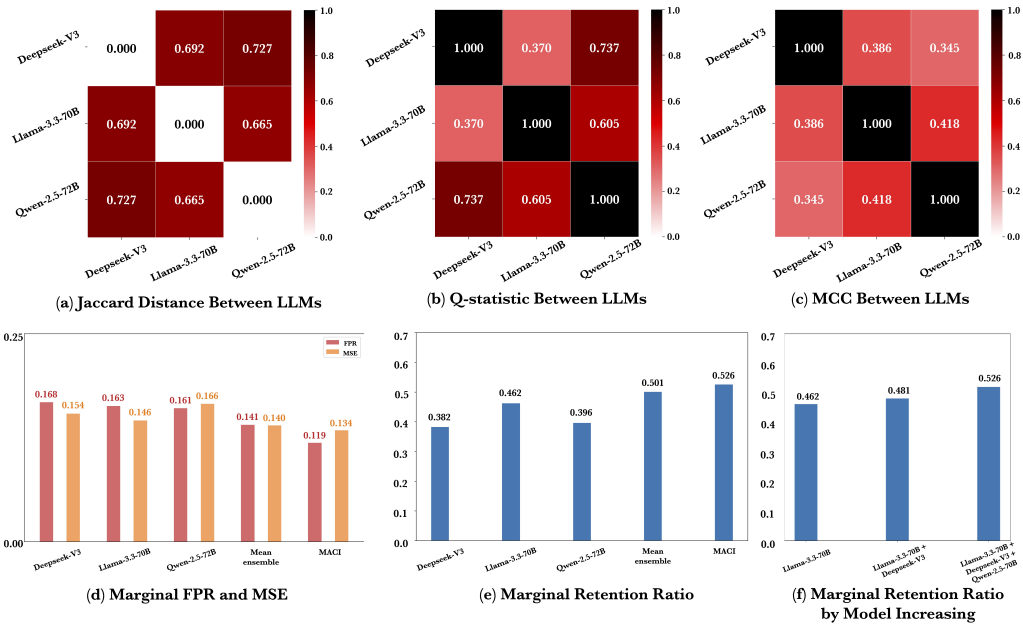}
    \caption{{(a)-(c) display diversity diagnostics including Jaccard distance, Q-statistic, and MCC; (d)-(e) show the performance metrics of MACI; and (f) illustrates the retention trend by ensemble size. (a), (b), and (c) reveal high disagreement and low correlation between LLMs' predictions on false-claims, indicating diverse detection patterns that strongly support the necessity of using an ensemble. (d) demonstrates the sequential improvement in FPR and MSE from a single-LLM and arithmetic mean ensemble to our proposed MACI, while (e) shows that as these error rates improve, the retention ratio also increases. (f) demonstrates that as the number of models in the ensemble increases ($k=1, 2, 3$) with the optimal combination selected, the retention ratio consistently improves, validating the efficiency of scaling the ensemble.}}
    \label{fig:Ensemble_full_analysis}
\end{figure}

\section{Statement on the Use of Large Language Models}
\label{app:LLM}
We used an LLM for minor editing and scripting automation only; core ideas, experiments, and analyses were conducted by the authors.

\end{document}